\documentclass[hidelinks,onefignum,onetabnum]{siamart250106}



\usepackage{lipsum}
\usepackage{amsfonts}
\usepackage{graphicx}
\usepackage{epstopdf}
\usepackage{algorithmic}
\ifpdf
  \DeclareGraphicsExtensions{.eps,.pdf,.png,.jpg}
\else
  \DeclareGraphicsExtensions{.eps}
\fi


\newsiamremark{remark}{Remark}
\newsiamremark{hypothesis}{Hypothesis}
\crefname{hypothesis}{Hypothesis}{Hypotheses}
\newsiamthm{claim}{Claim}

\headers{Neural Implicit Solution Formula for HJ PDEs}{Y. Park and S. Osher}

\title{Neural Implicit Solution Formula for Efficiently \\Solving Hamilton-Jacobi Equations\thanks{Submitted to the editors DATE 
\funding{S. Osher was partially supported by
MURI DOD-AFOSR-N00014-20-1-2787,
NSF NSF-2208272, and NSF STROBE NSF-1554564. Y. Park was supported by the NRF Grant RS-2024-00343226.}}}

\author{Yesom Park\thanks{Department of Mathematics, University of California, Los Angeles 
  (\email{yeisom@math.ucla.edu}).}
\and Stanley Osher\thanks{Department of Mathematics, University of California, Los Angeles
  (\email{sjo@math.ucla.edu}).}}

\usepackage{amsopn}


\ifpdf
\hypersetup{
  pdftitle={Neural Implicit Solution Formula for HJ PDEs},
  pdfauthor={Y. Park and S. Osher}
}
\fi



\usepackage{subfigure}
\usepackage{booktabs} 



\usepackage{empheq}




\newtheorem{example}{Example}[section]



\newcommand{\R}{\mathbb{R}}

\newcommand{\cL}{\mathcal{L}}

\newcommand{\cS}{\mathcal{S}}

\newcommand{\cU}{\mathcal{U}}

\newcommand{\bN}{\mathbb{N}}

\newcommand{\bR}{\mathbb{R}}

\newcommand{\dV}[1][u]{\cU\left(\Omega;\bR^{d_v}\right)}
\newcommand{\dVt}[1][u]{\cU\left(\left[0,\infty\right)\times\Omega;\bR^{d_v}\right)}

\newcommand{\bx}{\mathbf{x}}
\newcommand{\by}{\mathbf{y}}
\newcommand{\bz}{\mathbf{z}}
\newcommand{\bv}{\mathbf{v}}
\newcommand{\ba}{\mathbf{a}}
\newcommand{\bb}{\mathbf{b}}

\newcommand{\bp}{\mathbf{p}}
\newcommand{\bq}{\mathbf{q}}

\DeclareMathOperator*{\argmin}{\arg\min}

\newcommand*\diff{\mathop{}\!\mathrm{d}}



%


\begin{document}

\maketitle
\begin{abstract}
This paper presents an implicit solution formula for the Hamilton-Jacobi partial differential equation (HJ PDE). The formula is derived using the method of characteristics and is shown to coincide with the Hopf and Lax formulas in the case where either the Hamiltonian or the initial function is convex. It provides a simple and efficient numerical approach for computing the viscosity solution of HJ PDEs, bypassing the need for the Legendre transform of the Hamiltonian or the initial condition, and the explicit computation of individual characteristic trajectories. A deep learning-based methodology is proposed to learn this implicit solution formula, leveraging the mesh-free nature of deep learning to ensure scalability for high-dimensional problems. Building upon this framework, an algorithm is developed that approximates the characteristic curves piecewise linearly for state-dependent Hamiltonians. Extensive experimental results demonstrate that the proposed method delivers highly accurate solutions, even for nonconvex Hamiltonians, and exhibits remarkable scalability, achieving computational efficiency for problems up to 40 dimensions.
\end{abstract}

\begin{keywords}
Hamilton-Jacobi equations, implicit solution formula, deep learning 
\end{keywords}

\begin{MSCcodes}
65M25, 68T07, 35C99
\end{MSCcodes}

\section{Introduction}
Hamilton-Jacobi partial differential equations (HJ PDEs) are of paramount importance in various fields of mathematics, physics, and engineering, including optimal control \cite{evans1984differential, sideris2005efficient, bansal2017hamilton}, mechanics \cite{denman1973solution, de2008linear}, and the study of dynamic systems \cite{khanin2010particle, rajeev2008hamilton}. As they provide a powerful framework for modeling systems governed by physical laws, HJ PDEs have a wide range of applications in diverse areas such as geometric optics \cite{osher1993level, minano2006hamilton}, computer vision \cite{caselles1997geodesic, gilboa2009nonlocal, osher2007geometric}, robotics \cite{lin1998optimal,lewis2003robot, bansal2020hamilton}, trajectory optimization \cite{delahaye2014mathematical, parzani2018hamilton}, traffic flow modeling \cite{imbert2013hamilton, laval2013hamilton}, and financial strategies \cite{forsyth2007numerical, cao2009optimal}. These applications illustrate the versatility and significance of HJ PDEs, emphasizing the necessity for effective methods to solve them in both theoretical and practical contexts. It is well-known that the solutions to HJ PDEs are typically continuous but exhibit discontinuous derivatives, irrespective of the smoothness of the initial conditions or the Hamiltonian. Moreover, such solutions are typically non-unique. In this regard, viscosity solutions \cite{crandall1983viscosity} are commonly considered as the appropriate notion of solution, as they reflect the physical characteristics inherent to the problem.

Numerical methods for solving HJ PDEs have been extensively developed, with numerous practical applications across various fields. The most prominent methods include essentially non-oscillatory (ENO) and weighted ENO (WENO) type schemes \cite{osher1991high, jiang2000weighted, bryson2003high, qiu2005hermite}, semi-Lagrangian methods \cite{falcone2002semi, cristiani2007fast, falcone2013semi}, and level set approaches \cite{osher1988fronts, osher1993level, osher2004level, mitchell2004computing, ansari2013application}. However, they encounter significant scalability challenges as the dimensionality of the state space increases. These methods rely on discretization of the state space with a grid and approximating the Hamiltonian in a discrete form. Consequently, the number of grid points required to obtain accurate solutions grows exponentially with the dimensionality of the problem, resulting in prohibitive computational costs. In high-dimensional settings, particularly those involving more than four dimensions, this scaling issue renders the classical methods impractical for many real-world applications, where high-dimensional state spaces are prevalent.

Several approaches have been proposed to address the curse of dimensionality in solving HJ PDEs. Methods based on max-plus algebra \cite{mceneaney2006max, akian2008max, fleming2000max} show promise but are restricted to specific classes of optimal control problems and encounter significant challenges in practical implementation due to their complexity. Another promising approach involves the use of Hopf or Lax formulas to represent solutions to HJ PDEs \cite{darbon2016algorithms, chow2017algorithm, chow2019algorithm, chen2024hopf}. These formulas offer a causality-free approach, where solutions at each spatial and temporal point can be computed by solving an optimization problem, thus enabling parallel computation.
This approach eliminates the reliance on grid-based discretization, making it particularly well-suited for high-dimensional problems. However, these methods require computing the Legendre transform of the Hamiltonian or initial function and are generally applicable only under specific assumptions, such as convexity, or when the problem can be framed as a particular type of control problem. 
In parallel, algorithms based on Pontryagin's Maximum Principle \cite{kang2015causality, kang2017mitigating, yegorov2021perspectives}, which employ the method of characteristics, have been proposed. Despite their potential, the practical effectiveness of these methods is often limited by the need to solve a system of ordinary differential equations (ODEs) at each point. Additionally, some of these methods assume that multiple characteristics do not intersect, a condition that may not hold in general scenarios.
Furthermore, alternative techniques, such as tensor decomposition \cite{dolgov2021tensor} and polynomial approximation \cite{kalise2018polynomial, kalise2020robust}, have been studied for specific control problems.

Recent advancements in deep learning have given rise to a growing interest in leveraging the extensive representational capabilities of neural networks to solve PDEs \cite{sirignano2018dgm, yu2018deep, raissi2019physics, lu2019deeponet, li2020fourier, yang2023context}. The viscosity solution of HJ PDEs is challenging to obtain directly from the PDE itself, which underscores the development of alternative approaches beyond the established methods like physics-informed neural networks (PINNs) \cite{raissi2019physics}. In response, data-driven methods have been proposed for solving HJ PDEs \cite{nakamura2021adaptive, dolgov2023data, cui2024supervised}; however, these methods face several challenges, including the need for large amounts of training data, the limitation that their performance cannot exceed the accuracy of the numerical methods used to generate the data, and concerns regarding their ability to generalize to unseen scenarios. Moreover, the integration of reinforcement learning techniques to solve HJ PDEs related to control problems has been studied \cite{zhou2021actor, lin2022policy}. Another line of research has focused on the development of specialized neural network architectures that express representation formulas to specific HJ PDEs
\cite{darbon2020overcoming, darbon2021some, darbon2023neural}.
One of the most closely related prior works introduces a deep learning approach for learning implicit solution formulas along with characteristics for scalar conservation laws associated with one-dimensional HJ PDEs \cite{zhang2022implicit}. However, this method does not ensure the attainment of an entropy solution.

This study presents a novel implicit solution formula for HJ PDEs. The proposed implicit formula is derived through the characteristics of the HJ PDE, with the costate identified as the gradient of the solution at the current spatio-temporal point, leading to an implicit representation formula for the solution. We demonstrate that this new formula coincides with the classical Hopf and Lax formulas, which provides the viscosity solution for HJ PDEs in the case where either the Hamiltonian or the initial function is convex (or concave). Notably, the implicit formula is simpler than both the Hopf and Lax formulas, as it does not require the Legendre transform of either the Hamiltonian or the initial function, thereby broadening its practical applicability. Furthermore, although being based on characteristics, the implicit formula alleviates the need to solve the system of characteristic ODEs from the initial state to the present time. From an optimal control perspective, we further explore the connection of the proposed formula with the Pontryagin's maximum principle and Bellman's principle, showing that the proposed implicit solution formula serves as an implicit representation of Bellman’s principle. 

Building on this foundation, we propose a deep learning-based approach to solve HJ PDEs by learning the implicit solution formula. This method approximates the solution as a Lipschitz continuous function, leveraging the powerful expressive capacity of neural networks. Unlike traditional grid-based methods, our approach does not require discretization of the domain, making it highly scalable and efficient, especially for high-dimensional problems. This effectively mitigates the curse of dimensionality, ensuring that computational time and memory usage scale efficiently with dimensionality. Thanks to the inherent simplicity of the implicit solution formula, it obviates the need for computing the Legendre transform and individual characteristic trajectories, thereby enhancing both its applicability and computational efficiency across a wide range of problems. Through extensive and rigorous experimentation, we show that the proposed algorithm provides accurate solutions even for problems with up to 40 dimensions with negligible increases in computational cost. 
Importantly, the method also shows robust performance on various nonconvex HJ PDEs, for which mathematical demonstration has not been established, underscoring its versatility and potential.

We extend our approach to handle HJ PDEs with state-dependent Hamiltonians. In such cases, where the characteristic curves are no longer linear, deriving an implicit solution formula becomes more intricate. To address this, we approximate the characteristic curves as piecewise linear segments over short time intervals, applying the proposed implicit solution formula at each interval. This leads to an efficient time-marching algorithm that can handle state-dependent Hamiltonians, which we validate through a series of experiments  involving high-dimensional optimal control problems. The results demonstrate that the proposed method is not only simple and efficient but also effectively solving a wide range of high-dimensional, nonconvex HJ PDEs, highlighting its potential as a valuable tool for addressing complex optimal control problems and dynamic systems.

\section{Implicit Solution formula of Hamilton-Jacobi Equations}
\subsection{Implicit Solution Formula along Characteristics}\label{sec:implicit_HJ}
In this subsection, we introduce a novel implicit solution formula for the Hamilton-Jacobi partial differential equation (HJ PDE):
\begin{equation}\label{eq:hj}
    \begin{cases}
        u_t+ H\left(\nabla u\right)=0 & \text{ in } \Omega\times(0,T)\\
        u=g & \text{ on } \Omega\times\{t=0\},
    \end{cases}
\end{equation}
where $\Omega\subset\bR^d$ is the spatial domain, $H:\bR^d\rightarrow\bR$ is the Hamiltonian and $g:\Omega\rightarrow\bR$ is the initial function.
System of characteristic ODEs for \eqref{eq:hj},  also known as Hamilton's system, is given by the following:
\begin{subequations}\label{eq:characteristics_all}
\newcommand{\sys}{%
  $\left\lbrace
  \vphantom{\begin{aligned} 1 \\ 1 \\ 1 \\ 1 \end{aligned}}%
  \right.$%
}
\begin{align}
\raisebox{-0.72\height}[0pt][0pt]{\sys}
\dot{\bx}&=\nabla H \label{eq:characteristic_x}\\
\dot{u}&=q + \bp^{\text{T}}\nabla H = -H+\bp^{\text{T}}\nabla H\label{eq:characteristic_u}\\
\dot{q} & = 0\label{eq:characteristic_q}\\
\dot{\bp} & = 0,\label{eq:characteristic_p}
\end{align}
\end{subequations}
where the variables $q$ and $\bp$ are shorthand for the partial derivatives $q=u_t$ and $\bp=\nabla u$, respectively.
From \eqref{eq:characteristic_p} it is clear that the value of $\bp$, which is the sole argument of the Hamiltonian, remains constant along the characteristic. Therefore, the characteristic emanated from $\bx\left(0\right)=\bx_0\in\Omega$ is a straight line
\begin{equation*}
    \bx\left(t\right) = t\nabla H\left(\bp\right)+\bx_0,
\end{equation*}
implying that 
\begin{align*}
   u\left(t,\bx\left(t\right)\right) &= -tH\left(\bp\right) + t\bp^{\text{T}}\nabla H\left(\bp\right) + u\left(\bx_0,0\right)\\
        & = -tH\left(\bp\right) + t\bp^{\text{T}}\nabla H\left(\bp\right) + g\left(\bx_0\right).
\end{align*}
Given the constant nature of $\bp$ along each characteristic line, its value can be determined at any intermediate time between the initial and current times. In this context, we adopt $\bp$ as the gradient of the solution at the current time.
Substituting $\bp=\nabla u$ and expressing $\bx\left(t\right)=\bx\in\Omega$ induces that
\begin{equation*}
\bx_0 = \bx - t\nabla H\left(\nabla u\left(\bx,t\right)\right),
\end{equation*}
and hence 
we attain the following \textbf{implicit solution formula} for HJ PDEs \eqref{eq:hj}:
\begin{equation}\label{eq:implicit_character}
    u\left(\bx,t\right) =-tH\left(\nabla u\right) + t\nabla u^{\text{T}}\nabla H\left(\nabla u\right) + g\left(\bx-t\nabla H\left(\nabla u\right)\right).
\end{equation}
It is worth noting that this implicit formula expresses the solution without requiring the Legendre transform of the Hamiltonian $H$ or the initial function $g$.
Moreover, it does not require to compute individual characteristic trajectories by solving the system of characteristic ODEs. Therefore, it provides a highly practical and straightforward approach to solving HJ PDEs. Building upon this formula, we propose a highly simple and effective deep learning-based methodology for solving HJ PDEs in \cref{sec:method}.

A key distinction between conventional approaches based on characteristics and the proposed implicit solution formula lies in the treatment of $\bp$, which is chosen as the gradient of the solution $\nabla u$ at the current time $t$. Since $\bp$ remains constant along each characteristic line, it can be readily determined from the initial data. Consequently, conventional methods typically express $\bp$ in terms of $\nabla g\left(\bx_0\right)$. However, these approaches are limited in situations where no characteristic traces back to the initial time $t=0$, resulting in the gradient at the current time not being accessible from the initial data. In contrast, our approach employs the current value of $\bp\left(t\right)=\nabla u\left(\bx,t\right)$, allowing the implicit solution formula to effectively handle such scenarios.

It is well-established that under certain assumptions on the Hamiltonian $H$ and the initial function $g$, a representation formula for the viscosity solution can be derived. The first is the \textit{Hopf-Lax formula} 
\begin{equation}\label{eq:hopf_lax}
     u\left(\bx,t\right) = \inf_{\by} \Bigl\{ tH^\ast\left(\frac{\bx-\by}{t}\right) + g\left(\by\right)\Bigr\},
\end{equation}
which holds for convex (or concave) $H$ and Lipschitz $g$ \cite{hopf1965generalized, bardi1984hopf, lions1982generalized}, or for Lipschitz and convex $H$ and continuous $g$ \cite{subbotin2013generalized}, or also for
strictly convex $H$ and lower semicontinuous (l.s.c.) $g$  \cite{kruzhkov1960cauchy,kruzhkov1967generalized}.
Here, where $H^\ast$ is the Legendre transforms of $H$.
On the other hand, \textit{Hopf formula}
\begin{equation}\label{eq:hopf}
    u\left(\bx,t\right) = -\inf_{\bz} \Bigl\{ g^\ast\left(\bz\right) + tH\left(\bz\right) - \bx^{\text{T}}\bz \Bigr\}
\end{equation}
is valid for Lipschitz and convex (or concave) $g$ and merely continuous
$H$ \cite{hopf1965generalized, bardi1984hopf}, or for convex $g$ and Lipschitz $H$ \cite{subbotin2013generalized}. In the following, we demonstrate that the proposed implicit solution formula \eqref{eq:implicit_character} represents these two respective formulas under the conditions under which they hold.

\begin{theorem}\label{thm:implicit_formula_HL}
    Assume the Hamiltonian $H$ is differentiable and satisfies
\begin{equation}\label{eq:assump_H}
    \begin{cases}
        \bp\mapsto H\left(\bp\right) \text{ is strictly convex or concave},\\
        \lim_{\left\vert\bp\right\vert\rightarrow\infty}\frac{H\left(\bp\right)}{\left\vert\bp\right\vert}=+\infty,
    \end{cases}
\end{equation}
and the initial function $g$ is l.s.c.
Then, the continuous function $u$ that satisfies the implicit solution formula \eqref{eq:implicit_character} is the viscosity solution of \eqref{eq:hj} a.e.
\end{theorem}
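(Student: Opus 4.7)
The plan is to identify $u$ a.e.\ with the Hopf--Lax function $u_{HL}(\bx,t) = \inf_{\by}\bigl\{tH^*\bigl(\tfrac{\bx-\by}{t}\bigr) + g(\by)\bigr\}$, which under the stated hypotheses on $H$ (strictly convex, superlinear) and $g$ (l.s.c.) is the unique viscosity solution of \eqref{eq:hj}. The main tool is Legendre duality: assumption \eqref{eq:assump_H} makes $H^*$ finite, convex, and $C^1$ with $\nabla H^*=(\nabla H)^{-1}$, hence the pointwise identity
\begin{equation*}
H^*(\nabla H(\bp)) \;=\; \bp\cdot\nabla H(\bp)-H(\bp)
\end{equation*}
holds for every $\bp\in\bR^d$.

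At any $(\bx,t)$ where $u$ is differentiable, set $\bp:=\nabla u(\bx,t)$ and $\by:=\bx-t\nabla H(\bp)$. Substituting the duality identity into \eqref{eq:implicit_character} rewrites it as $u(\bx,t)=tH^*\bigl(\tfrac{\bx-\by}{t}\bigr)+g(\by)$, immediately yielding $u(\bx,t)\ge u_{HL}(\bx,t)$. For the reverse inequality I would show the specific $\by$ minimizes the Hopf--Lax functional. Assume first that $g$ is $C^1$. Differentiating \eqref{eq:implicit_character} in $\bx$, with $\bp=\nabla u$ and $\partial_{x_i}\by=e_i-tD^2H(\bp)\partial_{x_i}\bp$, the symmetric cross-terms $-t\nabla H(\bp)^{\text{T}}\partial_{x_i}\bp$ and $t(\partial_{x_i}\bp)^{\text{T}}\nabla H(\bp)$ cancel, and a short calculation reduces to the matrix identity
\begin{equation*}
(D_\bx\by)^{\text{T}}\bigl(\bp-\nabla g(\by)\bigr)=0.
\end{equation*}
Since $D^2H\succ 0$ by strict convexity and $D_\bx\by=I-tD^2H(\bp)D_\bx\bp$, this Jacobian is invertible outside a closed set of measure zero (the caustic set), forcing $\bp=\nabla g(\by)$. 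This is precisely the Euler--Lagrange optimality condition for $\by\mapsto tH^*\bigl(\tfrac{\bx-\by}{t}\bigr)+g(\by)$, which, by convexity of the functional, identifies $\by$ as a global minimizer. Therefore $u=u_{HL}$ a.e.\ in the $C^1$ case.

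For merely l.s.c.\ $g$, I would approximate $g$ by smooth sup-convolutions $g_\varepsilon\uparrow g$ that $\Gamma$-converge to $g$, apply the argument above to $g_\varepsilon$, and pass to the limit using stability of the Hopf--Lax formula under monotone convergence of initial data, together with continuity of the implicit formula in $g$. The main obstacle will be the reverse inequality: \eqref{eq:implicit_character} alone only produces the candidate $\by$, so establishing its optimality relies crucially on the differentiation identity above (or its approximation counterpart). A secondary technical point is ensuring that $u$ is locally Lipschitz, so that Rademacher's theorem supplies differentiability on a full-measure set; this should follow from the explicit form of the implicit formula combined with the coercivity of $H$ and known regularity of the Hopf--Lax representative.
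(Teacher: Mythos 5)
Your argument runs in the opposite logical direction from the paper's: the paper starts from the Hopf--Lax function (known to be the viscosity solution under \eqref{eq:assump_H}), expands $H^\ast$ into the inf--sup form \eqref{eq:inf_sup}, writes the first-order optimality conditions to identify $\bz^\star=\nabla u$ and $\by^\star=\bx-t\nabla H\left(\nabla u\right)$, and substitutes back to recover \eqref{eq:implicit_character}. You instead take a $u$ satisfying \eqref{eq:implicit_character} and try to prove $u=u_{HL}$ a.e.\ via two inequalities. Your first half is clean and correct: the conjugacy identity $H^\ast\left(\nabla H\left(\bp\right)\right)=\bp\cdot\nabla H\left(\bp\right)-H\left(\bp\right)$ rewrites \eqref{eq:implicit_character} as $u\left(\bx,t\right)=tH^\ast\left(\frac{\bx-\by}{t}\right)+g\left(\by\right)$ for the particular $\by=\bx-t\nabla H\left(\nabla u\right)$, which immediately gives $u\geq u_{HL}$. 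This is a sharper packaging of the same duality computation the paper performs.

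The reverse inequality, however, has a genuine gap. You conclude from the stationarity condition $\bp=\nabla g\left(\by\right)$ that $\by$ is a \emph{global} minimizer ``by convexity of the functional,'' but the functional $\by\mapsto tH^\ast\left(\frac{\bx-\by}{t}\right)+g\left(\by\right)$ is convex only when $g$ is convex; the theorem assumes $g$ merely l.s.c. For nonconvex $g$ this map can have many critical points (this is exactly the regime where characteristics cross and the $\inf$ selects among several candidates), so satisfying the Euler--Lagrange condition does not place $\by$ at the infimum, and $u\leq u_{HL}$ does not follow. Two further steps are asserted rather than proved: the differentiation producing $\left(D_\bx\by\right)^{\text{T}}\left(\bp-\nabla g\left(\by\right)\right)=0$ requires $D_\bx\bp=D^2u$ to exist, which Rademacher's theorem does not supply for a merely Lipschitz $u$; and the claim that $D_\bx\by=I-tD^2H\left(\bp\right)D_\bx\bp$ is invertible off a null set is precisely the caustic-avoidance statement that would need proof. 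To be fair, the paper's own proof is also formal (it manipulates optimality conditions without addressing which critical point the $\inf$ selects, and it only proves the direction ``the Hopf--Lax solution satisfies \eqref{eq:implicit_character}'' rather than the stated converse), but your write-up makes a specific claim --- convexity of the Hopf--Lax objective --- that is false under the stated hypotheses and on which the reverse inequality hinges.
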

\begin{proof}
First, we can observe that the implicit solution formula \eqref{eq:implicit_character} exactly satisfies the initial condition $u=g$ of \eqref{eq:hj} at the initial time $t=0$.

Under the assumptions, the viscosity solution of the HJ PDE is described by the Hopf-Lax formula \eqref{eq:hopf_lax}.
By expanding the Legendre transform in the Hopf-Lax formula \eqref{eq:hj}, the viscosity solution is expressed as follows
    \begin{align}
        u\left(\bx,t\right) & = \inf_{\by} \sup_{\bz} \Bigl\{t\Bigl(\bz^{\text{T}}\left(\frac{\bx-\by}{t}\right)-H\left(\bz\right) \Bigr) + g\left(\by\right) \Bigr\}\\
         & = \inf_{\by} \sup_{\bz} \Bigl\{\bz^{\text{T}}\left(\bx-\by\right)-tH\left(\bz\right) + g\left(\by\right) \Bigr\} \label{eq:inf_sup},
    \end{align}
The Euler-Lagrange equation of Hopf-Lax formula leads to
\begin{equation}\label{eq:optimal_hl}
\by^\star =    \underset{\by}{\text{argmin}} \Bigl\{ tH^\ast\left(\frac{\bx-\by}{t}\right) + g\left(\by\right)\Bigr\} = \bx-t\nabla H\left(\nabla u\right).
\end{equation}
Furthermore, differentiating \eqref{eq:inf_sup} with respect to $\bz$ provides that the optimal $\bz^\star$ 
satisfies 
\begin{equation*}
    \bx-\by^\star - t\nabla H\left(\bz^\star\right)=0.
\end{equation*}
Together with \eqref{eq:optimal_hl}, we have
\begin{equation}\label{eq:optimal_h}
    \bz^\star = \nabla u.
\end{equation}
Substituting these \eqref{eq:optimal_hl} and \eqref{eq:optimal_h} into \eqref{eq:inf_sup} results in the implicit formula \eqref{eq:implicit_character}.
\end{proof}

\begin{theorem}\label{thm:implicit_formula_Hopf}
    Assume the initial function $g$ satisfies
\begin{equation*}
    \begin{cases}
        \bx\mapsto g\left(\bx\right) \text{ is convex or concave},\\
        \lim_{\left\vert\bx\right\vert\rightarrow\infty}\frac{g\left(\bx\right)}{\left\vert\bx\right\vert}=+\infty,
    \end{cases}
\end{equation*}
that the Hamiltonian $H$ is continuous, and that either the $H$ or $g$ is Lipschitz continuous.
Then, the continuous function $u$ that satisfies the implicit solution formula \eqref{eq:implicit_character} is the viscosity solution of \eqref{eq:hj} a.e.
\end{theorem}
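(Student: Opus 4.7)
The plan is to mirror the proof of \cref{thm:implicit_formula_HL}, but starting from the Hopf formula \eqref{eq:hopf} instead of the Hopf-Lax formula. First I would observe, exactly as before, that the implicit formula \eqref{eq:implicit_character} collapses to $u=g$ when $t=0$, so the initial condition is automatic. Under the hypotheses of the theorem, the continuous viscosity solution of \eqref{eq:hj} is known to be represented by \eqref{eq:hopf}.

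Next I would expand the Legendre transform $g^\ast$ appearing in \eqref{eq:hopf} as
\begin{equation*}
g^\ast\left(\bz\right) = \sup_{\by}\Bigl\{\by^{\text{T}}\bz - g\left(\by\right)\Bigr\},
\end{equation*}
and rewrite the Hopf formula as the sup-inf expression
\begin{equation*}
u\left(\bx,t\right) = \sup_{\bz}\inf_{\by}\Bigl\{\left(\bx-\by\right)^{\text{T}}\bz - tH\left(\bz\right) + g\left(\by\right)\Bigr\}.
\end{equation*}
This is the direct analogue of \eqref{eq:inf_sup}, with the roles of $H$ and $g$ interchanged. I would then write down the first-order optimality conditions: stationarity in $\by$ yields $\bz^\star = \nabla g\left(\by^\star\right)$, while stationarity in $\bz$ yields $\by^\star = \bx - t\nabla H\left(\bz^\star\right)$. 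Differentiating the objective with respect to $\bx$ (envelope-type argument, exactly parallel to how \eqref{eq:optimal_h} was obtained in the previous theorem) identifies the optimizer as $\bz^\star = \nabla u$. Substituting $\bz^\star = \nabla u$ and $\by^\star = \bx - t\nabla H\left(\nabla u\right)$ back into the sup-inf expression reproduces \eqref{eq:implicit_character}.

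The main obstacle I anticipate is the legitimacy of the steps that were carried out formally in \cref{thm:implicit_formula_HL}: the swap between the sup and inf, the use of first-order conditions when $g$ is merely convex (hence possibly nonsmooth), and the envelope identification $\bz^\star = \nabla u$. The Lipschitz hypothesis on $H$ or $g$ together with convexity and superlinear growth of $g$ is exactly the setting in which the Hopf formula is rigorously justified, and in particular guarantees that the inner $\inf_{\by}$ is attained and that Sion's minimax theorem applies, so the interchange of sup and inf is valid. Nondifferentiability of $g$ would be handled by reading $\nabla g\left(\by^\star\right)$ as an element of the subdifferential $\partial g\left(\by^\star\right)$; since we only need existence of one optimal pair $\left(\by^\star,\bz^\star\right)$ to produce the identity \eqref{eq:implicit_character} almost everywhere, subdifferential calculus suffices. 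All remaining manipulations are then purely algebraic and identical in form to those at the end of the proof of \cref{thm:implicit_formula_HL}.
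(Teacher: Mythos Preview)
Your proposal is correct and follows essentially the same route as the paper: expand $g^\ast$ to reach the $\sup_{\bz}\inf_{\by}$ expression, identify $\bz^\star=\nabla u$ by differentiating in $\bx$ (envelope argument), obtain $\by^\star=\bx-t\nabla H(\bz^\star)$ from stationarity in $\bz$, and substitute. The only cosmetic difference is that the paper applies the envelope step directly to the unexpanded Hopf expression $u=-g^\ast(\bz^\star)-tH(\bz^\star)+\bx^{\text T}\bz^\star$ rather than to the $\sup$--$\inf$ form; note also that no minimax swap is actually needed, since the Hopf formula already delivers the order $\sup_{\bz}\inf_{\by}$ after expanding $g^\ast$.
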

\begin{proof}
Since the viscosity solution is described by the Hopf formula \eqref{eq:hopf_lax} under these assumptions, it can be written as follows:
    \begin{align}
        u\left(\bx,t\right) & = -g^\ast\left(\bz^\star\right) - tH\left(\bz^\star\right) + \bx^{\text{T}}\bz^\star\label{eq:hopf_explicit}\\
         & = \inf_{\by} \Bigl\{\bz^{\star\text{T}}\left(\bx-\by\right)-tH\left(\bz^\star\right) + g\left(\by\right) \Bigr\} \label{eq:inf_sup_hopf}\\
         & = \bz^{\star\text{T}}\left(\bx-\by^\star\right)-tH\left(\bz^\star\right) + g\left(\by^\star\right).
    \end{align}
Differentiating the both side of \eqref{eq:hopf_explicit} with respect to $\bx$ induces
\begin{equation*}
    \frac{\partial u}{\partial\bx} = -\frac{\partial}{\partial\bz}\Bigl\{ g^\ast\left(\bz^\star\right) + tH\left(\bz^\star\right)\Bigr\}\cdot \frac{\partial\bz^\star}{\partial\bx} + \bz^\star = \bz^\star,
\end{equation*}
where the last equality follows from the optimality of $\bz^\star$.
Consequently, we have $\bz^\star = \nabla u$.
Furthermore, differentiating \eqref{eq:inf_sup_hopf} with respect to $\bz$ provides that the optimal $\by^\star$ 
satisfies 
\begin{equation*}
    \bx-\by^\star - t\nabla H\left(\bz^\star\right)=0,
\end{equation*}
that is,
\begin{equation*}
\by^\star = \bx - t\nabla H\left(\bz^\star\right) = \bx - t\nabla H\left(\nabla u\right),
\end{equation*}
which concludes the proof.
\end{proof}
Theorems \ref{thm:implicit_formula_HL} and \ref{thm:implicit_formula_Hopf} offers the theoretical validation of the implicit solution formula \eqref{eq:implicit_character} under the assumption of convexity of the Hamiltonian $H$ or the initial function $g$. However, this result has not yet been extended to the nonconvex case. Nonetheless, as illustrated in Sub\cref{sec:exp_nonconvex}, we present robust empirical evidence demonstrating the performance of the proposed approach through extensive experiments on a diverse range of nonconvex examples, where neither the Hamiltonian nor the initial function is convex (or concave). These results suggest the potential applicability and validity of the proposed formula in such scenarios.

To facilitate comprehension of the implicit solution formula, a simple example is presented.
\begin{example}\label{ex:main}
    Consider a one-dimensional example with a quadratic Hamiltonian and a homogeneous initial condition:
    \begin{equation}\label{eq:ex_hj}
    \begin{cases}
        u_t+ u_x^2=0 & \text{ in } \bR\times(0,\infty)\\
        u=0 & \text{ on } \R\times\{t=0\}.
    \end{cases}
\end{equation}
The viscosity solution to this problem is $u^\ast = 0$.
Note that there are infinitely many Lipschitz functions satisfying \eqref{eq:ex_hj} a.e. \cite{evans2022partial}, for instance,
\begin{equation*}
    v\left(x,t\right)=\begin{cases}
    0 & \text{if }\mid x\mid\geq t\\
    x-t & \text{if }0\leq x\leq t\\
    x-t & \text{if }-t\leq x\leq 0.
    \end{cases}
\end{equation*}
This example shows that, although there are infinitely many Lipschitz functions that satisfy the HJ PDE, the implicit solution formula uniquely characterizes the viscosity solution.
The implicit solution formula \eqref{eq:implicit_character} corresponding to \eqref{eq:ex_hj} is written as
\begin{equation}\label{eq:ex_implicit_form}
    u = tu_x.
\end{equation}
For $t=0$, \eqref{eq:ex_implicit_form} satisfies the initial condition $u=0$.
For a fixed time $t>0$, \eqref{eq:ex_implicit_form} represents an ordinary differential equation (ODE) with respect to the variable $x$ with a coefficient that depends on $t$, and the ODE admits infinitely many solutions
\begin{equation*}
    u = Ce^{tx},\ \forall C\in\bR.
\end{equation*}
However, in order to satisfy the initial condition $u=0$ at $t=0$, it follows that $C$ must be zero.
Therefore, the viscosity solution $u^\ast=0$ is the unique continuous function that satisfies the implicit solution formula \eqref{eq:ex_implicit_form}.
\end{example}

This example illustrates that, despite the existence of an infinitely many weak solutions to the governing HJ PDE, the continuous function that satisfies the implicit solution formula \eqref{eq:implicit_character} is the unique viscosity solution. However, it also suggests that, at a fixed time $t>0$, the implicit formula may admit multiple solutions. For a fixed $t>0$, the implicit solution formula \eqref{eq:implicit_character} describes a first-order nonlinear static PDE (or an ODE in the one-dimensional case) in $\bx$, where the time variable $t$ appears as coefficients. The absence of boundary conditions in this static PDE at fixed $t>0$ naturally leads to the ill-posedness of the PDE with multiple solutions. Therefore, the condition that the implicit formula \eqref{eq:implicit_character} satisfies the initial condition at $t=0$ is crucial, and finding a continuous function that satisfies the implicit solution formula across the entire spacetime domain from the initial data is essential for obtaining the unique viscosity solution. It is noteworthy, however, that the above example is taken in the unbounded spatial domain $\bR$. For the general case of HJ PDEs on a bounded domain $\Omega$, boundary conditions are specified. In such cases, the given boundary condition serves as the boundary condition for the static PDE described by \eqref{eq:implicit_character} at a fixed time, thereby ensuring the uniqueness of the solution.

\begin{remark} (Level set propagation)
If the Hamiltonian $H$ is homogeneous of degree one in its gradient argument, i.e., $H$ takes of the form with a function $\bv:\bR^d\rightarrow\bR^d$ 
\begin{equation}\label{eq:normal_velocity}
    H\left(\nabla u\right) = \bv\left(\frac{\nabla u}{\left\Vert \nabla u\right\Vert}\right)^{\text{T}}\nabla u,
\end{equation}
then the implicit formula \eqref{eq:implicit_character} comes down to
the following simple formula a.e. 
\begin{equation}\label{eq:implicit_lvset}
    u\left(\bx,t\right) = g\left(\bx-t\bv\left(\frac{\nabla u}{\left\Vert \nabla u\right\Vert}\right)\right),
\end{equation}
where the solution $u$ is constant along the characteristics.
\end{remark}

\subsection{Control Perspectives on the Implicit Solution Formula}
In this subsection, we revisit the implicit solution formula \eqref{eq:implicit_character} from the perspective of control theory, elucidating that it represents an implicit formulation of Bellman's principle. This perspective also enables a comprehensive exploration of the relationships between the implicit solution formula and Pontryagin's maximum principle (PMP) and the Hopf-Lax formula \eqref{eq:hopf_lax}, while also highlighting the distinctions between these established approaches and the proposed implicit formula.

Let $L=L\left(\bq\right):\bR^d\rightarrow\bR$ be the corresponding Lagrangian, that is, $L=H^\ast$, the Legendre transform of $H$. Under the assumption \eqref{eq:assump_H} on the Hamiltonian $H$, the Lagrangian satisfies
\begin{equation*}
    \begin{cases}
        \bq\mapsto L\left(\bq\right) \text{ is convex},\\
        \lim_{\left\vert\bq\right\vert\rightarrow\infty}\frac{L\left(\bq\right)}{\left\vert\bq\right\vert}=+\infty,
    \end{cases}
\end{equation*}
and $H\left(\bp\right)=L^\ast\left(\bp\right)=\underset{\bq}{\sup}\left\{\bp^{\text{T}}\bq - L\left(\bq\right)\right\}$.

It is well-known that the viscosity solution $u$ of the HJ PDE 
\begin{equation}\label{eq:oc_hjb}
   \begin{cases} u_t+H\left(\nabla u\right)=u_t+\underset{\bq}{\sup}\left\{\nabla u^{\text{T}}\bq - L\left(\bq\right)\right\}=0,\\
   u\left(\bx,0\right)=g\left(\bx\right)
   \end{cases}
\end{equation}
is represented by the value function of the following corresponding optimal control problem:
\begin{equation}\label{eq:oc_value_ft}
     u\left(\bx,t\right)=\underset{\bq}{\inf} \left\{\int_0^t L\left(\bq\left(s\right)\right)\diff s+g\left(\by\left(0\right)\right) : \by\left(t\right)=\bx, \dot{\by}(s)=\bq(s), 0\leq s\leq t \right\}.
\end{equation}

Pontryagin's maximum principle (PMP) states that the optimal trajectory of state $\by\left(t\right)$ arriving at $\by\left(t\right)=\bx$ and costate $\bp\left(t\right)$ satisfies
\begin{subequations}\label{eq:pmp}
\newcommand{\sys}{%
  $\left\lbrace
  \vphantom{\begin{aligned} 1 \\ 1 \\ 1 \end{aligned}}%
  \right.$%
}
\begin{align}
\raisebox{-0.72\height}[0pt][0pt]{\sys}
 \dot{\by} & = \bq,\ \by\left(t\right)=\bx,\label{eq:oc_characteristic_x}\\
\dot{\bp} & = 0,\ \bp\left(0\right)=\nabla g\left(\by(0)\right), \label{eq:oc_characteristic_p}\\
 \bq & = \underset{\bv}{\text{argmax}}\left\{\bp^{\text{T}}\bv - L\left(\bv\right)\right\}\label{eq:oc_optimal_a}.
\end{align}
\end{subequations}
Note that this is identical to the characteristic ODEs for the state $\bx$ \eqref{eq:characteristic_x} and the gradient of the solution \eqref{eq:characteristic_p} of the HJ PDEs. Therefore, PMP implies that the characteristic of the HJ PDE corresponds to the optimal trajectory.

We now establish that both the Hopf-Lax formula and the implicit solution formula can be derived from the PMP in conjunction with the definition of the value function. This facilitates a comprehensive understanding of their relationships and differences.

\paragraph{\textbf{Hopf-Lax Formula}}
Since the costate $\bp$ is constant along the optimal trajectory \eqref{eq:oc_characteristic_p}, the last condition \eqref{eq:oc_optimal_a} of the PMP  implies that $\bq$ is also constant. Therefore, from \eqref{eq:oc_characteristic_x}, it follows that the optimal trajectory of $\by$ is a straight line, whose solution is given by
\begin{equation}\label{eq:oc_pull_y}
    \by\left(0\right) = \by\left(t\right)-t\bq = \bx-t\bq.
\end{equation}
Therefore, the optimal $\bq$ is expressed by $\by\left(0\right)$ as follows:
\begin{equation}\label{eq:linear_optimal_trajectory}
    \bq = \frac{\bx-\by\left(0\right)}{t},
\end{equation}
and hence, the minimization with respect to $\bq$ can be transformed into a minimization with respect to $\by\left(0\right)\in\bR^d$. Substituting this relation \eqref{eq:linear_optimal_trajectory} into the definition of the value function \eqref{eq:oc_value_ft} leads to the following Hopf-Lax formula:
\begin{align*}
    u\left(\bx,t\right)&=\underset{\by\in\bR^d}{\inf} \left\{\int_0^tL\left(\frac{\bx-\by}{t}\right)\diff s+ g\left(\by\right)\right\}\\
    & = \underset{\by\in\bR^d}{\inf}\left\{tL\left(\frac{\bx-\by}{t}\right)+g\left(\by\right)\right\}\\
    & = \underset{\by\in\bR^d}{\inf} \left\{tH^\ast\left(\frac{\bx-\by}{t}\right)+g\left(\by\right)\right\}.
\end{align*}
In other words, the Hopf-Lax formula is derived by substituting the control $\bq$ in terms of the initial state $\by\left(0\right)=\by$, leveraging the fact that the optimal trajectory is linear \eqref{eq:linear_optimal_trajectory}, as determined by the characteristic ODE of the PMP.

\paragraph{\textbf{Implicit Solution formula}}
The implicit solution formula \eqref{eq:implicit_character} is derived in a manner analogous to the Hopf-Lax formula, but it differs by expressing $\bp$ as the gradient of the value function $\nabla u$ and additionally removing the Legendre transform.
From the optimality of $\bq$ in \eqref{eq:oc_optimal_a}, the Hamiltonian is written as
\begin{equation}\label{eq:oc_hamiltonian_form}
    H\left(\bp\right) = \bp^{\text{T}}\bq + L\left(\bq\right).
\end{equation}
It follows that 
\begin{equation}\label{eq:oc_diff_H}
    \nabla_{\bp}H = \frac{\partial H}{\partial\bp} + \frac{\partial H}{\partial\bq}\frac{\partial\bq}{\partial\bp} = \frac{\partial H}{\partial\bp}=\bq,
\end{equation}
where $\frac{\partial H}{\partial\bq}=0$ is induced from \eqref{eq:oc_optimal_a}.
Let $\bq^\ast$ be the optimal control. Putting \eqref{eq:oc_optimal_a}, \eqref{eq:oc_pull_y}, and \eqref{eq:linear_optimal_trajectory} to the definition of the value function in \eqref{eq:oc_value_ft} leads to the following formula of the value function:
\begin{align}\label{eq:oc_implicit_formula}
    \begin{split}    u\left(\bx,t\right)&=tL\left(\bq^\ast\right)+g\left(\bx-t\bq^\ast\right)\\
        & = t\left(H\left(\bp\right)-\bp^{\text{T}}\bq^\ast\right)+g\left(\bx-t\bq^\ast\right)\\
        & = t\left(H\left(\bp\right)-\bp^{\text{T}}\nabla_{\bp}H\left(\bp\right)\right)+g\left(\bx-t\nabla_{\bp}H\left(\bp\right)\right),
    \end{split}
\end{align}
where the second and third equalities are derived from \eqref{eq:oc_hamiltonian_form} and \eqref{eq:oc_diff_H}, respectively. In other words, by substituting these two expressions \eqref{eq:oc_hamiltonian_form} and \eqref{eq:oc_diff_H}, we derive the formula for the value function $u$ that is independent of both the control $\bq$ and the Legendre transform.
Since the optimal $\bp$ is the gradient of the value function $\nabla u$, the solution formula \eqref{eq:oc_implicit_formula} derived from PMP is identical to the implicit solution formula \eqref{eq:implicit_character}.
Furthermore, it is important to note that the definition of the value function \eqref{eq:oc_value_ft} precisely encapsulates the integral of the characteristic ODE of $u$ \eqref{eq:characteristic_u}; that is, it directly represents the solution to the characteristic ODE of $u$ \eqref{eq:characteristic_u}. In other words, the characteristic ODE of $u$ \eqref{eq:characteristic_u}, which is not explicitly included in the PMP formula \eqref{eq:pmp}, is inherently embedded within the construction of Bellman's value function.

Consequently, the PMP \eqref{eq:pmp}, the Bellman's value function \eqref{eq:oc_value_ft}, the Hopf-Lax formula \eqref{eq:hopf_lax}, and the proposed implicit solution formula \eqref{eq:implicit_character} are all interconnected. The PMP states that the characteristics of the HJ PDE correspond to the optimal trajectory, and Bellman's principle expresses the value function in terms of the solution to the characteristic ODE of $u$ \eqref{eq:characteristic_u}. Together, they imply that the viscosity solution to the HJ PDE \eqref{eq:hj} is defined along the characteristics. 

However, there are notable differences in how these formulas yield the solution to \eqref{eq:hj} from a practical perspective.
The PMP necessitates the solution of a single trajectory of the characteristic ODEs, which implies that, when attempting to compute the value function, one must solve a system of ODEs for each trajectory, introducing significant computational complexity. The Hopf-Lax formula, by exploiting the linearity of the optimal trajectory, eliminates the need to solve such ODEs. However, it involves the computation of the Legendre transform of the Hamiltonian $H$, ultimately leading to a challenging min-max problem. In contrast, the implicit solution formula \eqref{eq:implicit_character} alleviates both the ODE solving of PMP and the min-max problem from the Hopf-Lax formula by leveraging the fact that the optimal costate $\bp$ is the gradient of the solution $\nabla u$. Consequently, compared to the PMP and the Hopf-Lax formula, the proposed implicit formula provides a more practical and widely applicable approach for solving HJ PDEs.

\section{Learning Implicit Solution with Neural Networks}\label{sec:method} 
In this section, we introduce a deep learning-based approach for solving the implicit solution formula \eqref{eq:implicit_character}. Building upon the implicit solution formula, we propose the following minimization problem:
\begin{equation}\label{eq:implicit_loss}
    \min_{u} \cL\left(u\right)\coloneqq \int_0^T\int_\Omega \Bigl(u+tH\left(\nabla u\right)-t\nabla u^{\text{T}}\nabla H\left(\nabla u\right)-g\left(\bx-t\nabla H\left(\nabla u\right)\right)\Bigr)^2 \diff\bx \diff t.
\end{equation}
The minimization problem \eqref{eq:implicit_loss} is inherently complex to be efficiently solved using classical numerical methods. To address this challenge, we propose a deep learning framework that has shown significant effectiveness in optimizing complex problems. This approach enables the scalable learning of the implicit solution formula, even in high-dimensional settings, thereby allowing the solution of the HJ PDEs \eqref{eq:hj} to be represented by a neural network, which is a Lipschitz function.

\subsection{Implicit Neural Representation}\label{sec:inr}
We represent the solution $u$ of the HJ PDE \eqref{eq:hj} using a standard artificial neural network architecture, a multi-layer perceptron (MLP). The MLP is a function $u_\theta:\bR^d\times\bR\rightarrow\bR$ defined as the composition of functions, which can be expressed as follows:
\begin{equation}\label{eq:MLP}
u_\theta\left(\bx,t\right)= W\left(h_L\circ \cdots \circ h_0\left(\bx,t\right)\right) + \bb,\ \left(\bx,t\right)\in\bR^n\times\bR,
\end{equation}
where $L\in\bN$ is a given depth, $W\in\bR^{1\times d_L}$ is a weight of the output layer, $\bb\in \bR$ is an output bias and the perceptron (also known as the hidden layer) $h_{\ell}:\bR^{d_{\ell-1}} \rightarrow \bR^{d_{\ell}}$ is defined by 
\begin{equation*}
h_{\ell}\left(\by\right)=\sigma\left(W_\ell \by + \bb_\ell\right),\ \by\in\bR^{d_{\ell-1}},\ \text{for all } \ell=0,\dots,L,
\end{equation*}
for weight matrices $W_\ell\in \bR^{d_{\ell}\times d_{\ell-1}}$ with the input dimension $d_{-1}=d+1$, bias vectors $\bb_\ell\in\bR^{d_\ell}$, and a non-linear activation function $\sigma$. The dimensions $d_\ell$ of the hidden layers are also called by the width of the network. A shorthand notation $\theta$ is used to refer to all the parameters of the network, including the weights $\left\{W, W_0,\cdots, W_L\right\}$ and biases $\left\{\bb, \bb_0,\cdots ,\bb_L\right\}$.
Since Lipschitz continuous activation functions $\sigma$ are used, the MLP $f_\theta$ is a Lipschitz function and is also bounded on a bounded domain.
Given the current parameter configuration, the parameters $\theta$ are successively adapted by minimizing an assigned loss function explained in the subsequent section.

Representing the solution of HJ PDEs using neural networks offers a scalable and efficient approach for modeling the spatio-temporal dependencies of the solution, offering several advantages over classical numerical schemes. Classical methods discretize the spatial vector field using primitives such as meshes, which scale poorly with the number of spatial samples. In contrast, representing the spatio-temporal function through networks known as implicit neural representations (INRs) encodes spatial and temporal dependencies through neural network parameters $\theta$, each globally influencing the function. 
Consequently, the memory usage of INRs remains independent of the spatial sample size, being determined solely by the number of network parameters, which enables scalability in high-dimensional settings as evidenced in \cref{tab:mse} for the proposed method. Additionally, INRs are adaptive, leveraging their capacity to represent arbitrary spatio-temporal locations of interest without requiring memory expansion or structural modifications. The expressivity of non-linear neural networks enables INRs to achieve superior accuracy compared to mesh-based and meshless methods, even under the same memory constraints \cite{takikawa2021neural, chen2023implicit}. Furthermore, INRs represent the solution as a continuous function rather than at discrete points, with activation functions that can be tailored to the solution's regularity. Thanks to the architecture of MLPs, exact derivatives can be computed via the chain rule, eliminating the need for numerical differentiation methods such as finite differences. The partial derivatives of $u_\theta$ are efficiently computed using automatic differentiation library (\texttt{autograd}) \cite{paszke2017automatic}.

\subsection{Training}
Given that the solution $u$ is represented by the neural network $u_\theta$, the minimization problem \eqref{eq:implicit_loss} reduces to finding the network parameters $\theta$ that minimize $\cL$ in \eqref{eq:implicit_loss}. 
For notational convenience, we denote
\begin{equation*}
\cS\left(u\right) = u+tH\left(\nabla u\right)-t\nabla u^{\text{T}}\nabla H\left(\nabla u\right)-g\left(\bx-t\nabla H\left(\nabla u\right)\right).
\end{equation*}
The integral of $\cL$ is approximated using Monte Carlo methods
\begin{equation}\label{eq:discrete_loss}
    \hat{\cL}\left(\theta\right)=\frac{1}{M}\sum_{j=1}^{M}\cS\left(u_\theta\left(\bx_j,t_j\right)\right)^2,
\end{equation}
with the $M$ collocation points $\left\{\left(\bx_j,t_j\right)\right\}_{j=1}^{M}$ randomly sampled from a uniform distribution $U\left(\Omega\times\left[0,T\right]\right)$. This empirical loss $\hat{\cL}$ serves as the loss function for training the neural network.
The current network parameters are updated using a gradient-based optimizer to minimize the loss function $\hat{\cL}$. 
During training, different random collocation points are employed in each iteration to ensure accurate learning across the entire domain.
The partial derivatives of the network $u_\theta$ are computed through \texttt{autograd} when calculating the loss. The training procedure for optimization using gradient descent is summarized in \cref{alg:GD}.

\begin{algorithm}
\caption{Algorithm for Learning Implicit Solution of HJ PDEs}\label{alg:GD}
\begin{algorithmic}[1]
    \STATE{Initialize the network $u_\theta$ with an initial network parameter $\theta_0$.}
    \FOR{$n=0,\cdots,N$}
        \STATE{Randomly sample $M$ collocations points $\left\{\left(\bx_j,t_j\right)\right\}_{j=1}^{M}\sim U\left(\Omega\times\left[0,T\right]\right)$.}
        \STATE{Calculate the loss by Monte Carlo integration
        \[
        \hat{\cL}\left(\theta_{n}\right)=\frac{1}{M}\sum_{j=1}^{M}\cS\left(u_{\theta_n}\left(\bx_j,t_j\right)\right)^2.
        \]}
        \STATE{Update $\theta_n$ by gradient descent with a step size $\alpha>0$
        \[
        \theta_{n+1} \leftarrow \theta_n - \alpha\nabla_\theta \hat{\cL}\left(\theta_{n}\right).
        \]}
        \ENDFOR
   \RETURN $u_{\theta_N}$ as the predicted viscosity solution to the HJ PDE \eqref{eq:hj}.
\end{algorithmic}
\end{algorithm}

This algorithm is considerably simpler than existing methodologies in several respects and yields remarkable results, as demonstrated in \cref{sec:experiments}. Previous approaches \cite{darbon2016algorithms, chow2017algorithm, chow2019algorithm, chen2024hopf}, which aimed to obtain the viscosity solution via the Hopf or Lax formula, involved calculating the Legendre transform of the Hamiltonian or the initial function. Therefore, these methods were restricted to problems where the Legendre transform was easily computable or required solving numerically intensive min-max problems for each spatio-temporal point, limiting their general applicability. In contrast, our approach bypasses the Legendre transform by using an implicit formula, enabling us to handle a broader class of Hamiltonians and initial functions. Moreover, while prior methods based on characteristics or PMP \cite{kang2015causality, kang2017mitigating, yegorov2021perspectives} necessitated solving a system of ODEs to track individual trajectories, the proposed method eliminates the requirement for explicit trajectory computation.

The proposed method also overcomes the limitations of classical grid-based numerical methods, which face challenges when dealing with high-dimensional or large-scale problems due to the increasing number of grid points required as the dimension or domain size grows.
Unlike classical methods, the deep learning approach is characterized by its mesh-free nature, which precludes the necessity for a grid discretization of the computational domain.  The mesh-free approach allows for the random selection of collocation points in each iteration, with the selected points gradually covering the domain as iterations proceed.
Consequently, the computational and memory requirements do not increase significantly with higher dimensions, as evidenced in \cref{tab:mse} for the proposed method. Furthermore, under certain mild assumptions, it has been demonstrated that this stochastic gradient descent applied using randomly sampled collocation points converges to the minimizer of the original expectation loss \cite{karandikar2024convergence}. The absence of mesh generation also simplifies the practical implementation of the method.

This approach also offers distinct advantages over existing deep learning methods for solving PDEs. As an unsupervised learning method, it solves HJ PDEs given the Hamiltonian and initial condition, without requiring solution data for training. This addresses the limitations of supervised learning methods \cite{nakamura2021adaptive, dolgov2023data, cui2024supervised}, which rely on extensive solution data and do not guarantee generalization to unseen problems. The proposed approach also offers strengths compared to the established unsupervised methods, such as PINNs \cite{raissi2019physics} and the DeepRitz method \cite{yu2018deep}. DeepRitz, which is based on a variational formulation, is not suitable for HJ PDEs. PINNs, on the other hand, use the residual of the PDE itself as the loss function, which cannot guarantee obtaining the viscosity solution for HJ PDEs among multiple solutions. Since the viscosity solution cannot be directly derived from the PDE itself, there are inherent challenges in obtaining it from the PDE residual loss used in PINNs. In comparison, the proposed method learns an implicit formula for the solution that naturally inherits the properties of the viscosity solution through the characteristic equation, enabling effective solutions to HJ PDEs.
Furthermore, most deep learning methods for solving PDEs, including PINNs and DeepRitz method, use a training loss function that is the linear sum of the loss term corresponding to the PDE and the loss term for the initial condition. This requires a regularization parameter to balance the two loss terms, which is highly sensitive and difficult to optimize \cite{wang2022and}. In contrast, the proposed method employs an implicit solution formula, whereby the initial condition is automatically incorporated by substituting $t=0$ into \eqref{eq:discrete_loss}. As a result, our approach eliminates the need for a regularization parameter, using only a single loss function and obviating the distinction between the initial condition and the PDE.

When boundary conditions are specified in the spatial domain $\Omega$, we incorporate additional loss terms to enforce these conditions. For instance, when a Dirichlet boundary condition is imposed with the boundary function $h:\partial\Omega\rightarrow[0,T]\rightarrow\bR$, the following loss function is used:
\begin{equation*}
    \frac{1}{M_b} \sum_{j=1}^{M_b} \left(u\left(\bx_j^b,t^b_j\right)-h\left(\bx^b_j,t^b_j\right)\right)^2,
\end{equation*}
where the $M_b$ boundary collocation points $\left(\bx^b_j,t^b_j\right)\in\partial\Omega\times\left[0,T\right]$ are randomly sampled from a uniform distribution.
Similarly, for a periodic boundary condition, the boundary loss term is given as follows:
\begin{equation*}
     \frac{1}{M_b} \sum_{j=1}^{M_b} \left(u\left(\bx^b_j,t^b_j\right)-u\left(\by^b_j,t^b_j\right)\right)^2,
\end{equation*}
where $\by^b_i\in\partial\Omega$ represents the point on the opposite side of the domain $\Omega$ corresponding to $\bx^b_i$.
This boundary loss, weighted by the regularization parameter $\lambda>0$, is then added to the implicit solution loss $\hat{\cL}$ \eqref{eq:discrete_loss} to form the total training loss.

\begin{remark}
   If the goal is to obtain a solution at a specific time $t=T$ rather than over the entire temporal evolution,  integrating over time in the loss function may not be necessary. However, as shown in \cref{ex:main} in \cref{sec:implicit_HJ}, when the PDE lacks boundary conditions, at a fixed time $t>0$, the the implicit solution formula at a fixed $t$ results in a differential equation without boundary conditions, leading to the possibility of multiple spurious solutions. To address this, it is preferable to incorporate an integral over the entire temporal domain in the loss function, thereby training a continuous network to find a continuous solution that satisfies \eqref{eq:implicit_character} across the entire spacetime domain. On the other hand, when boundary conditions are given in the HJ PDE \eqref{eq:hj}, these can serve as the boundary condition for the differential equation \eqref{eq:implicit_character} at the fixed time, ensuring the uniqueness of the solution. In such cases, training the model exclusively with respect to the terminal time $t=T$ may suffice.
\end{remark}

\subsection{State-dependent Hamiltonian}\label{sec:state_H}
In this subsection, we propose an algorithm for the case of a state-dependent Hamiltonian, inspired by the implicit solution formula \eqref{eq:implicit_character}.
Consider the state-dependent HJ PDE defined in a domain $\Omega\subset\bR^d$
\begin{equation}\label{eq:x_dependent_hj}
    \begin{cases}
        u_t+ H\left(\bx, \nabla u\right)=0 & \text{ in } \Omega\times(0,T)\\
        u=g & \text{ on } \Omega\times\{t=0\}.
    \end{cases}
\end{equation}
The system of characteristic ODEs of \eqref{eq:x_dependent_hj} is given by
\begin{subequations}\label{eq:pmp}
\newcommand{\sys}{%
  $\left\lbrace
  \vphantom{\begin{aligned} 1 \\ 1 \\ 1 \end{aligned}}%
  \right.$%
}
\begin{align}
\raisebox{-0.72\height}[0pt][0pt]{\sys}
\dot{\bx} &=\nabla_\bp H\label{eq:x_dependent_characteristic_x}\\
\dot{u}&=-H+\bp^{\text{T}}\nabla_{\bp} H\label{eq:x_dependent_characteristic_u}\\
\dot{\bp} &= - \nabla_{\bx}H,\label{eq:x_dependent_characteristic_p}
\end{align}
\end{subequations}
where $\bp=\nabla u$.
Given that $\bp$ is no longer a constant along the characteristic \eqref{eq:x_dependent_characteristic_p}, the characteristics are not linear but instead curves. Consequently, computing the integral along these curves becomes highly challenging, making the derivation of an implicit solution formulation difficult.

\paragraph{Piecewise Linear Approximation of Characteristic Curves} We assume that $\bp$ remains relatively constant over short time intervals.
In other words, we approximate the characteristic curve as linear over short time segments.
To this end, we discretize the temporal domain by
\begin{equation*}
    t_0=0<t_1=\Delta t<t_2=2\Delta t<\cdots,t_N=N\Delta t=T.
\end{equation*}
For each $k=0,\cdots,N-1$, we can write the solution as follows: for $t\in\left[t_k,t_k+\Delta t\right]$,
\begin{equation*}
    u\left(\bx,t\right) = u\left(\bx,t_k+\tau\right) = u^k\left(\bx,\tau\right)
\end{equation*}
with $t=t_k+\tau$, $\tau\in\left[0,\Delta t\right]$. Then $u^k$ can be regarded as the solution of the following HJ PDE for time $0\leq t\leq \Delta t$ with the initial function $u^k\left(\cdot,0\right)=u^{k-1}\left(\cdot,\Delta t\right) = u\left(\cdot, k\Delta t\right)$:
\begin{equation}\label{eq:x_dependent_hj}
    \begin{cases}
        u^k_t+ H\left(\bx, \nabla u^k\right)=0 & \text{ in } \Omega\times(0,\Delta t)\\
        u^k\left(\bx,0\right)=u^{k-1}\left(\bx,\Delta t\right) & \text{ on } \Omega.
    \end{cases}
\end{equation}
Assuming that $\bp$ remains constant within each short time interval $\left[t_k,t_k+\Delta t\right]$, similar to the state-independent Hamiltonian discussed in \cref{sec:implicit_HJ}, we can derive the following \textit{implicit solution formula} for \eqref{eq:x_dependent_hj}:
\begin{align}\label{eq:x_dependent_implicit_formula}
    u^k\left(\bx,\tau\right) & = -\tau H\left(\bx,\nabla u^k\left(\bx,\tau\right)\right) + \tau\nabla u^k\left(\bx,\tau\right)^{\text{T}}\nabla_\bp H\left(\bx,\nabla u^k\left(\bx,\tau\right)\right)\\
    &+ u^{k-1}\left(\bx-\tau\nabla_\bp H\left(\bx,\nabla u^k\right), \Delta t\right).
\end{align}
This can be regarded as an implicit Euler discretization of the characteristic ODE \eqref{eq:x_dependent_characteristic_x}:
\begin{equation*}
\bx\left(\tau\right) = \bx\left(0\right) + \tau \nabla_\bp H\left(\bx\left(\tau\right),\nabla u\left(\bx\left(\tau\right),\tau\right)\right) + O\left(\tau^2\right)
\end{equation*}
for small $\tau\in\left[0,\Delta t\right]$.
For notational simplicity, let us denote
\begin{align*}
         \cS\left[u^k,u^{k-1}\right]\left(\bx,\tau\right)= & u^k\left(\bx,\tau\right) - \tau\nabla u^k\left(\bx,\tau\right)^{\text{T}}\nabla_\bp H\left(\bx,\nabla u^k\left(\bx,\tau\right)\right) \\
         & +\tau H\left(\bx,\nabla u^k\left(\bx,\tau\right)\right) - u^{k-1}\left(\bx-\tau\nabla_\bp H\left(\bx,\nabla u^k\right), \Delta t\right).
\end{align*}
\paragraph{Time Marching Algorithm}
Based on these, we propose the following time marching method that solves the HJ PDE \eqref{eq:x_dependent_hj} with the state dependent $H$ sequentially over short time intervals $\left[t_k,t_k+\Delta t\right]$:
\begin{enumerate}
    \item Set the initial condition $u^0 = g$.
    \item For $k=1,\cdots, N$,
    \begin{equation}\label{eq:loss_dt}
    u^k =  \underset{v}{\argmin}\int_0^{\Delta t}\int_\Omega 
\Bigl(\cS\left[v,u^{k-1}\right]\left(\bx,\tau\right)\Bigr)^2\diff\bx\diff t.
\end{equation}
\end{enumerate}
For each $k$, the predicted function $u^k$ approximates the solution $u$ of \eqref{eq:x_dependent_hj} on $t_k\leq t\leq t_{k+1}$, i.e.,
\begin{equation*}
    u^k\left(\bx,\tau\right)\approx u\left(\bx,k\Delta t+\tau\right),\ \forall \tau\in\left[0,\Delta t\right], \bx\in\Omega.
\end{equation*}

It is important to note that rather than using separate neural networks for each $u^k$, the model is trained using a single network, ensuring memory efficiency. After training the network for the solution $u^{k-1}$ on the time interval $\left[t_{k-1},t_{k-1}+\Delta t\right]$, the network parameters are saved. These saved parameters are then used as the initial function to train the same network for the subsequent solution $u^k$ by \eqref{eq:loss_dt}. As a result, when training $u^k$, the network is initialized with $u^{k-1}$, which accelerates the training process. Consequently, although the learning process is divided for time marching, the rapid convergence for each $u^k$ ensures that the overall training time does not increase significantly.

\section{Numerical Results}\label{sec:experiments}
In this section, we evaluate the performance of the proposed deep learning-based method for learning the implicit solution formula through a series of diverse examples and high-dimensional problems. Experiments are conducted on up to 40 dimensions, and both qualitative and quantitative results are presented. Although theoretical verification has not yet been established, extensive experiments on nonconvex Hamiltonians are also included, demonstrating the effectiveness of the proposed method in learning viscosity solutions. 

To assess the scalability of the proposed method, we maintain the same experimental configurations across all cases, regardless of dimensionality or domain size.
All experiments are conducted using an MLP \eqref{eq:MLP} of a depth $L=5$ and a width $64$ with the softplus activation function $\sigma\left(x\right)=\frac{1}{\beta}\log\left(1+e^{100 x}\right)$. The network is trained for for $N=200,000$ epochs using the gradient descent with an initial learning rate of $10^{-3}$ decayed by a factor of $0.99$ whenever the loss decreased. 
In each epoch, $M=5,000$ collocation points were uniformly randomly sampled from the domain. When boundary conditions are given, the regularization parameter $\lambda$ is set to 0.1 and the number of boundary collocation points $M_b$ is set to 200.
All experiments are implemented on a single NVIDIA GV100 (TITAN V) GPU.

\subsection{Convex Hamiltonians}
We begin by measuring the error with respect to the true solution for the theoretically validated convex (or concave) Hamiltonian. Experiments are conducted in up to 40 dimensions. In addition to accuracy, we also measure computational time and memory consumption to assess the efficiency of the approach for high-dimensional problems.
\begin{example}[Burgers' equation]\label{ex:burgers}
$\ $
    Consider the Burgers' equation with the quadratic Hamiltonian $H\left(\bp\right) = \frac{1}{2}\left\Vert\bp\right\Vert_2^2$ and initial function $g\left(\bx\right) = \left\Vert\bx\right\Vert_1$.
Experiments were conducted on the 1, 10, and 40 dimensions. The Mean Squared Error (MSE) with respect to the exact solution is summarized in the first row of \cref{tab:mse}.

To assess how the computational cost increases with dimension, we measure both computational time and memory consumption. The time taken for each training epoch was averaged over the total 10,000 training epochs, while the memory consumption is recorded as the maximum memory usage during a single epoch. The average values for these measurements across the three examples, including the two provided below, are reported in \cref{tab:mse}. It can be observed that neither computational time nor memory consumption increases sharply with dimension. It is important to emphasize that the computational time reported in \cref{tab:mse} refers to the time required to obtain the solution function over the entire spatio-temporal domain, rather than the time taken to compute the solution at discrete points or on a grid.
Moreover, as discussed in sub\cref{sec:inr}, the memory requirements of implicit neural representations are primarily determined by the size of the network. Increasing the dimension does not significantly alter the overall network size, except for the increase in the input dimension of the input layer. Consequently, the results in \cref{tab:mse} demonstrate that memory usage is nearly independent of dimensionality. These findings demonstrate that deep learning methods are highly scalable with respect to dimensionality, making them well-suited for addressing high-dimensional problems.
\end{example}
\begin{example}[Concave Hamiltonian] \label{ex:concave}
$\ $
    Consider the quadratic Hamiltonian $H\left(\bp\right) = -\frac{1}{2}\left\Vert\bp\right\Vert_2^2$ and initial function $g\left(\bx\right) = \left\Vert\bx\right\Vert_1$.
Experiments were conducted on the 1, 10, and 40 dimensions until time $T=1$.
MSE with respect to the exact solution is reported in the second row of \cref{tab:mse}.
\begin{table}[t]
    \centering
    \caption{The mean squared errors with the exact solution, the average computational time per epoch, and the memory usage for Examples \ref{ex:burgers}, \ref{ex:concave}, and \ref{ex:collision} across dimensions $d=1,10,40$ are presented. The computational time is measured as the average across three examples over a total of 10,000 epochs. Maximum memory consumption per iteration is measured. It is observed that the computational time and memory usage do
    not increase significantly as the dimension increases.} \label{tab:mse}
    \begin{tabular}{cccc}
    \toprule  
    Problem & $d=1$ & $d=10$ & $d=40$\\
    \midrule
    \cref{ex:burgers} & 1.14E-7 & 2.56E-5 & 1.30E-3\\
    \cref{ex:concave} & 8.59E-6 & 1.63E-4& 1.23E-3 \\
    \cref{ex:collision} & 7.08E-6 &  5.57E-5 & 1.13E-3\\
    Time (s) per Epoch & 0.01518 & 0.01630 & 0.01864\\
    Memory (MB) & 42.648 & 42.648 & 43.623\\
    \bottomrule
  \end{tabular}
\end{table} 
\end{example}

\begin{example}[Level set Propagation]\label{ex:collision}
    We consider the level set equation \cite{osher2004level} that governs the collision of two spheres, initially separated and moving along their respective normal directions, which ultimately results in a collision. 
    The level set propagation is written by
    \begin{equation*}
        u_t+\left\Vert\nabla u\right\Vert_2=0,
    \end{equation*}
    where the initial function $g$ is given as the signed distance function for two circles with centers at $(-0.3, 0, \cdots, 0)$ and $(0.3, 0, \cdots, 0)$, and radius of $0.2$.
    Experiments were conducted on the 1, 10, and 40 dimensions with $T=1$. 
    MSE with respect to the exact solution is summarized in the bottom row of \cref{tab:mse}. 
\cref{fig:collision} depicts the obtained solution for the two-dimensional case.
    \begin{figure}
    \centering
    \includegraphics[width=0.98\textwidth]{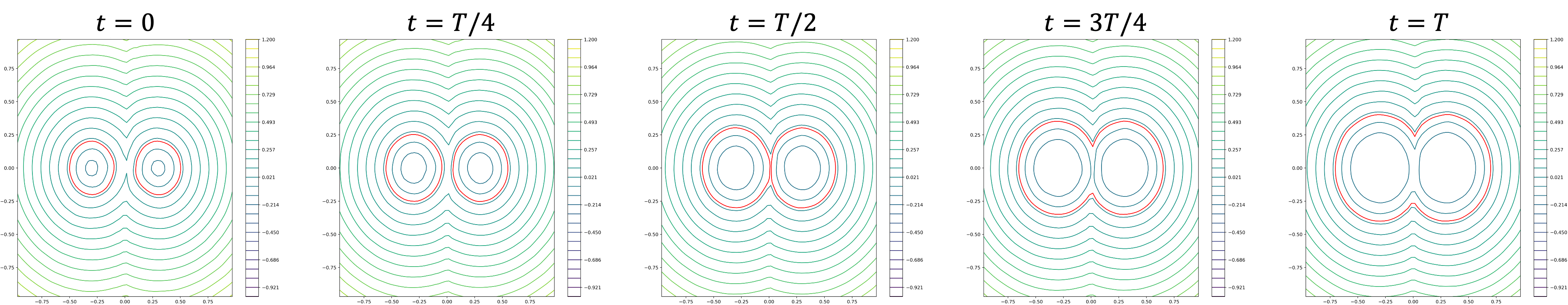}
    \caption{Iso-contours of the numerical solution to two-dimensional collision of circles in \cref{ex:collision}. The predicted zero leve lsets are represented by red curves.}
    \label{fig:collision}
\end{figure}
\end{example}

\subsection{Nonconvex Hamiltonians}\label{sec:exp_nonconvex}
In this subsection, we present experimental results for various Hamiltonians that are neither convex nor concave. While the theoretical proof for the proposed implicit solution formula has not yet been established in the nonconvex case, the experiments show that the proposed method effectively yields viscosity solutions.
\begin{example} \label{ex:cos}
We solve the nonlinear equation with a nonconvex Hamiltonian
\begin{equation*}
    u_t-\cos\left(\sum_{i=1}^d u_{x_i} + 1\right)=0,
\end{equation*}
with the initial function $g\left(\bx\right) = -\cos\left(\frac{\pi}{d}\sum_{i=1}^d x_i\right)$, and periodic boundary conditions presented in \cite{osher1991high}. 
The results for $d=1,2$ are depicted in \cref{fig:cos}.
The results are plotted up to time $T=0.2$, at which point kinks have already emerged in the solution.
\begin{figure}
    \centering
    \subfigure[$d=1$]{\includegraphics[width=0.97\textwidth]{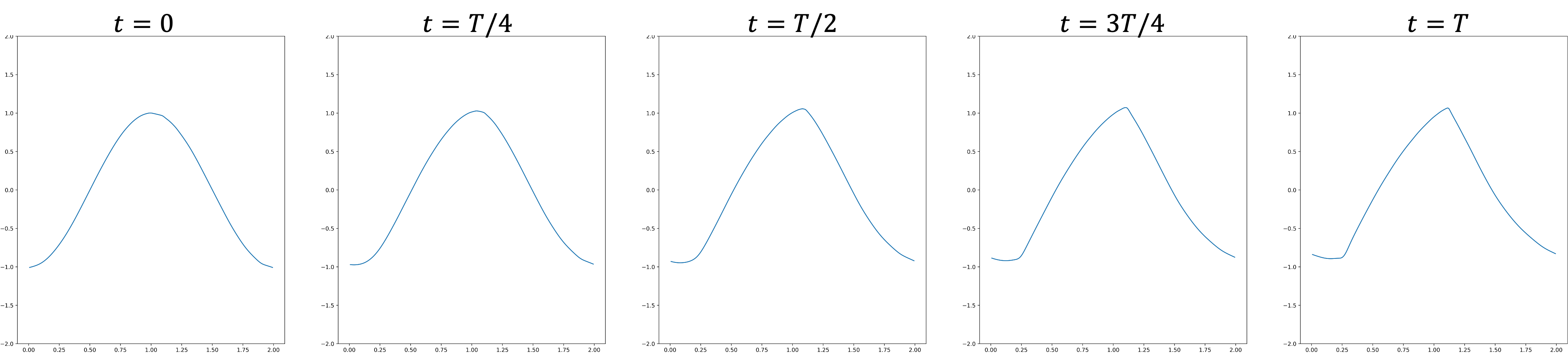}}$\quad$
	\subfigure[$d=2$]{\includegraphics[width=0.97\textwidth]{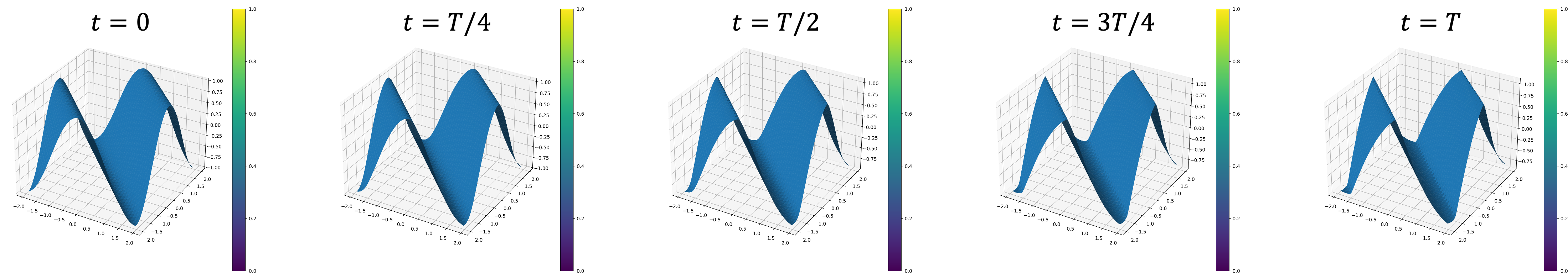}}
    \caption{The numerical results for \cref{ex:cos}.}
    \label{fig:cos}
\end{figure}
\end{example}

\begin{example}\label{ex:sin}
    The two-dimensional Riemann problem with a nonconvex flux \cite{osher1991high}
   \begin{equation*}
       \begin{cases}
           u_t+\sin\left(u_x+u_y\right)=0,\\
           u\left(x,y,0\right)=\pi\left(\left\vert y\right\vert - \left\vert x\right\vert\right).
       \end{cases}
   \end{equation*} 
   The predicted solution up to $T=1$ is depicted in \cref{fig:sin}.
   \begin{figure}
    \centering
    \includegraphics[width=0.97\textwidth]{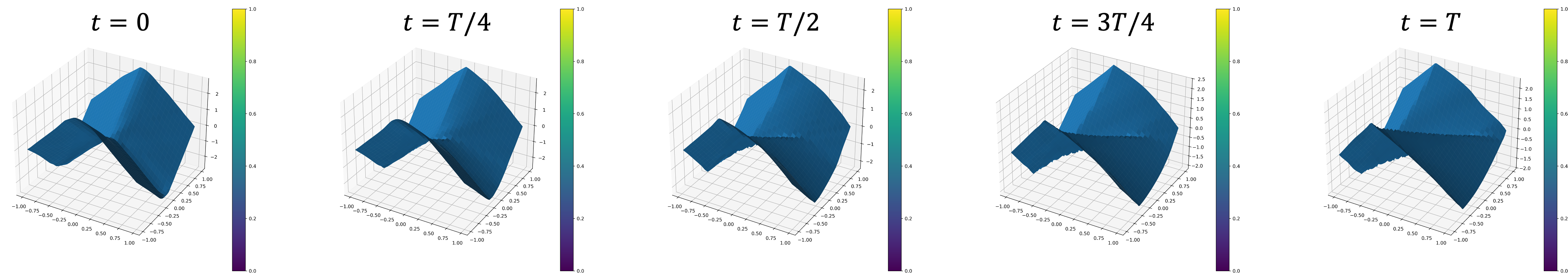}
    \caption{The numerical solution for \cref{ex:sin}}
    \label{fig:sin}
\end{figure}
\end{example}

\begin{example}\label{ex:multiply}
    The above nonconvex examples are actually one-dimensional along the diagonal. To evaluate the performance of the proposed method on fully two-dimensional problems \cite{bryson2003high}, we solve
    \begin{equation*}
        \begin{cases}
            u_t + u_xu_y = 0,\\
            u\left(x,y,0\right) = \sin(x)+\cos(y),
            \end{cases}
    \end{equation*}
    with periodic boundary condition and $T=1.5$. 
    The solution is smooth for $t<1$ and exhibits kinks for $t\geq 1$. The results presented in \cref{fig:multiply} show that the proposed method continues to accurately learn the solution even after the shock occurs.
    \begin{figure}
    \centering
    \includegraphics[width=0.98\textwidth]{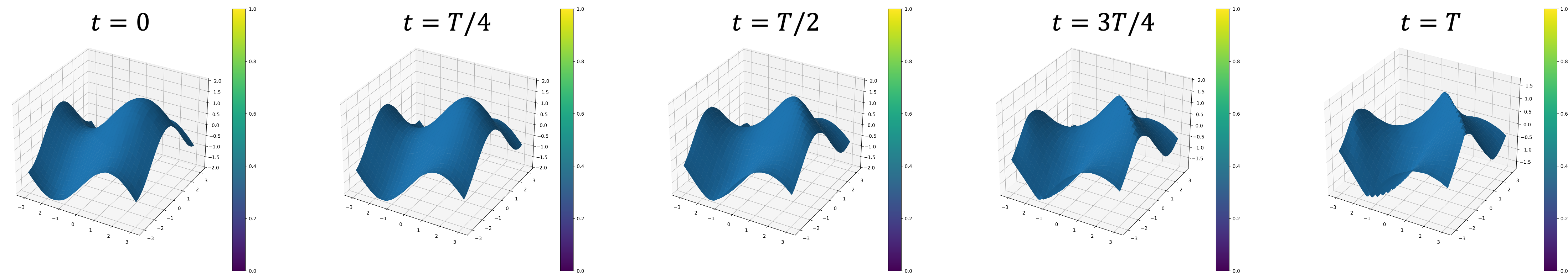}
    \caption{The numerical solution for \cref{ex:multiply}}
    \label{fig:multiply}
\end{figure}
\end{example}
\begin{example}[Eikonal equation]\label{ex:eikonal}
Consider a two-dimensional nonconvex problem \cite{osher1991high} that arises in geometric optics:
\begin{equation*}
    \begin{cases}
        u_t+\sqrt{u_x+u_y+1}=0,\\
        u\left(x,y,0\right) = \frac{1}{4}\left(\cos\left(2\pi x\right)-1\right)\left(\cos\left(2\pi y\right)-1\right) - 1.
    \end{cases}
\end{equation*}
The results up to time $T=0.45$ are shown in \cref{fig:eikonal}, where we can observe the sharp corners in the solution.
\begin{figure}
    \centering
    \includegraphics[width=0.98\textwidth]{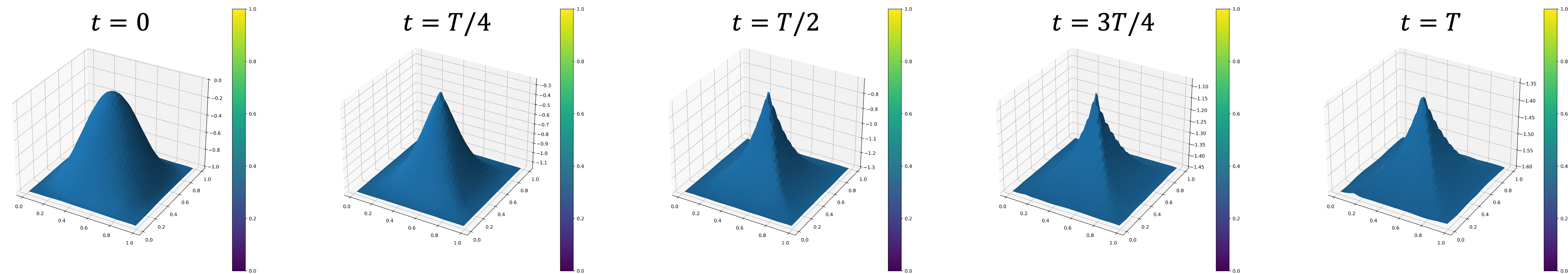}
    \caption{The numerical solution for \cref{ex:eikonal}}
    \label{fig:eikonal}
\end{figure}
\end{example}

\begin{example}\label{ex:combustion}
    Consider the combustion problem \cite{lafon1996high}:
    \begin{equation*}
    \begin{cases}
        u_t-\sqrt{u_x+u_y+1}=0,\\
        u\left(x,y,0\right) =\cos\left(2\pi x\right) - \cos\left(2\pi y\right).
    \end{cases}
\end{equation*}
Results up to time $0.27$ are given in \cref{fig:combustion}.
\begin{figure}
    \centering
    \includegraphics[width=0.98\textwidth]{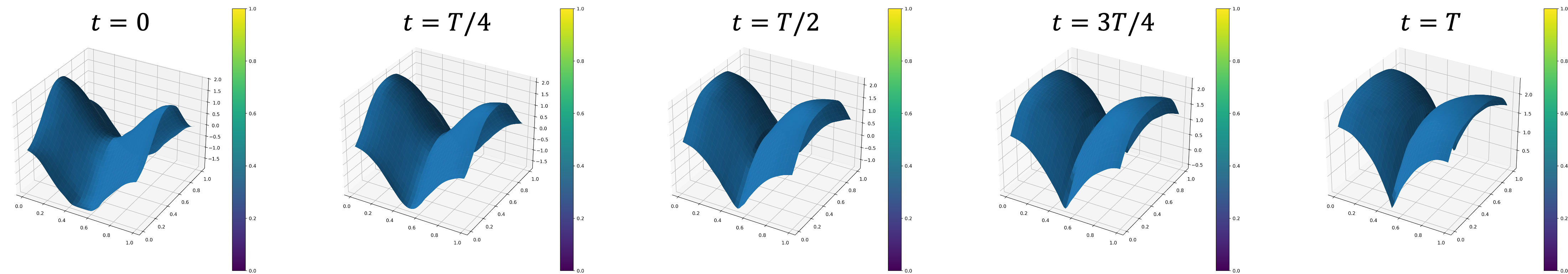}
    \caption{The numerical solution for \cref{ex:combustion}}
    \label{fig:combustion}
\end{figure}
\end{example}

\begin{example}\label{ex:cubic}
Consider the one-dimensional nonconvex problem
\begin{equation*}
    \begin{cases}
        u_t+ u_x^3-u_x=0,\\
        u\left(x,0\right) =-\frac{1}{10}\cos\left(5x\right).
    \end{cases}
\end{equation*}
The results up to time $T=0.7$ are shown in \cref{fig:cubic}, where we can observe can observe that the sinusoidal wave becomes progressively sharper.
\begin{figure}
    \centering
    \includegraphics[width=0.98\textwidth]{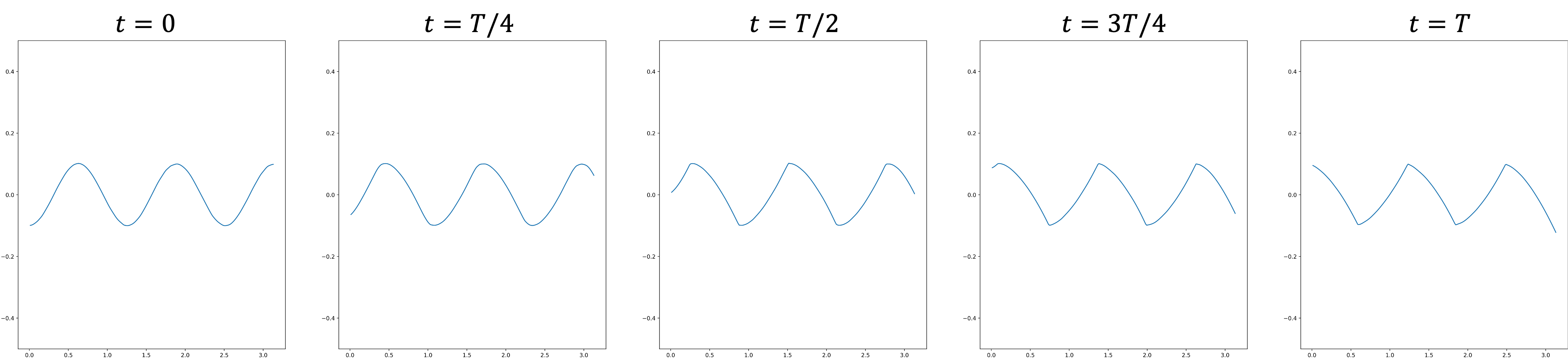}
    \caption{The numerical solution for \cref{ex:cubic}.}
    \label{fig:cubic}
\end{figure}
\end{example}

\subsection{State-dependent Hamiltonians}
This subsection provides experimental validation of the methodology proposed for the state-dependent Hamiltonian in sub\cref{sec:state_H}. It includes error analyses with respect to $\Delta t$ and addresses various state-dependent Hamiltonians, including a 10-dimensional optimal control problem.
\begin{example}\label{ex:state_sine}
We solve the following variable coefficient linear equation \cite{yan2011local}:
\begin{equation*}
    \begin{cases}
        u_t+ \sin \left(x\right) u_x=0,\\
        u\left(x,0\right) = \sin(x),
    \end{cases}
\end{equation*}
with periodic boundary condition.
The exact solution  is expressed by
\begin{equation*}
    u\left(x,t\right) = \sin\left(2\arctan\left(e^{-t}\tan\left(\frac{x}{2}\right)\right)\right).
\end{equation*}
We compute the solution up to $T=1$.
Since the characteristic curve for the state-dependent Hamiltonian is approximated linearly, the accuracy of the algorithm is influenced by the size of $\Delta t$. To verify this, we conducted experiments for various values of $\Delta t=0.5,0.25,0.1$, and the results are summarized in the top row of \cref{tab:state_mse}. The results show that the linear approximation of the proposed algorithm yields first-order accuracy with respect to $\Delta t$.

\begin{table}[t]
    \centering
    \caption{The mean squared errors (MSE) and relative mean square errors (RMSE) with the exact solution for Examples \ref{ex:state_sine} and \ref{ex:state_antisymm} with $\Delta t=0.5,0.25,0.1$.} \label{tab:state_mse}
    \begin{tabular}{c|cc|cc|cc}
    \toprule  
    Problem & \multicolumn{2}{c}{$\Delta t=0.1$} & \multicolumn{2}{c}{$\Delta t=0.25$} & \multicolumn{2}{c}{$\Delta t=0.5$}\\
   & MSE & RMSE & MSE & RMSE & MSE & RMSE\\
    \midrule
    \cref{ex:state_sine} & 3.27E-6 & 6.57E-6 & 8.82E-6 & 1.61E-5 & 1.26E-5 & 2.26E-5 \\
    \cref{ex:state_antisymm} & 6.29E-7 & 1.25E-3 & 3.88E-6 & 2.10E-3 & 6.12E-6 & 4.32E-3\\
    \bottomrule
  \end{tabular}
\end{table} 

\end{example}

\begin{example}\label{ex:state_antisymm}
We solve the following the two-dimensional linear equation which describes a solid body rotation around the origin \cite{cheng2007discontinuous}:
\begin{equation*}
    u_t -yu_x + xu_y=0,\ \left(x,y\right)\in\left(-1,1\right)^2
\end{equation*}
where the initial condition is given by
\begin{equation*}
   g\left(x,y\right)= \begin{cases}
        0 & 0.3 \leq r,\\
       0.3 - r & 0.1<r<0.3\\
       0.2 & r\leq 0.1,
    \end{cases}
\end{equation*}
where $r=\sqrt{(x-0.4)^2 + (y-0.4)^2}$.
We also impose the periodic boundary condition.
The exact solution is 
\begin{equation*}
    u\left(x,y,t\right) = g\left(x\cos t + y\sin t, -x\sin t + y\cos t\right).
\end{equation*}
We compute the solution up to $T=1$ and the numerical errors for $\Delta t=0.5,0.25,0.1$ are reported in the second row of \cref{tab:state_mse}.
As in the previous example, we observe that the error increases linearly as $\Delta t$ increases.
\end{example}

\begin{example}\label{ex:oc}
We solve an optimal control problem related to cost determination \cite{osher1991high}:
\begin{equation*}
    \begin{cases}
        u_t + u_x \sin y + \left(\sin y + \text{sign}\left(u_y\right)\right)u_y - \frac{1}{2}\sin^2 y - \left(1-\cos x\right)=0,\\
        u\left(x,y,0\right) = 0,
    \end{cases}
\end{equation*}
with periodic conditions. The result at $T=1$ is presented in \cref{fig:oc} and is qualitatively in agreement with \cite{osher1991high}.

\begin{figure}
    \centering
    \includegraphics[width=0.4\textwidth]{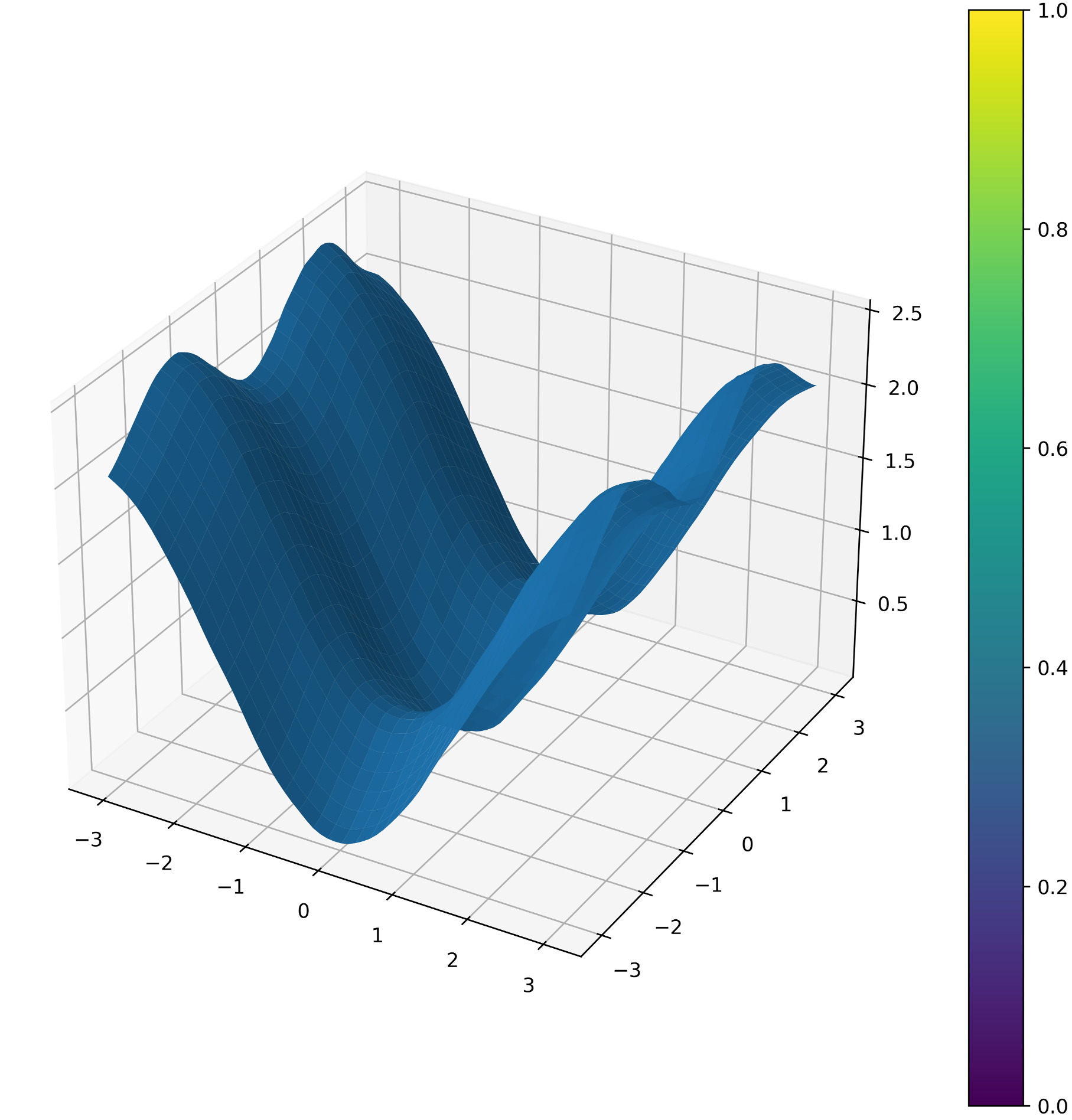}
    \caption{The numerical solution for \cref{ex:oc}.}
    \label{fig:oc}
\end{figure}
    
\end{example}

\begin{example}\label{ex:harmonic_oscillator}
    We solve the problem associated with the state-dependent Hamiltonian well-known as the harmonic oscillator:
    \begin{equation*}
        H^{\pm}\left(\bx,\bp\right)=\pm\frac{1}{2}\left(\left\Vert\bx\right\Vert_2^2 + \left\Vert\bp\right\Vert_2^2\right).
    \end{equation*}
    We consider the two-dimensional problem where the initial function is the level set function of an ellipsoid 
    \begin{equation}\label{eq:ellipse_initial_ft}
    g\left(x,y\right) = \frac{1}{2}\left(\frac{x^2}{2.5^2} + y^2-1\right).
    \end{equation}   
The results up to $T=0.4$ are depicted in \cref{fig:harmonic_oscillator}.
\begin{figure}
    \centering
    \subfigure[$H^+$]{\includegraphics[width=0.97\textwidth]{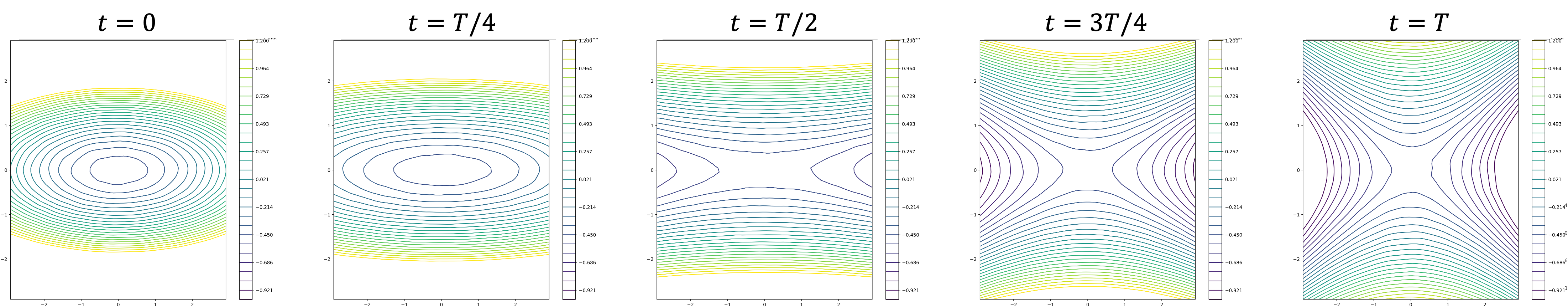}}$\quad$
	\subfigure[$H^-$]{\includegraphics[width=0.97\textwidth]{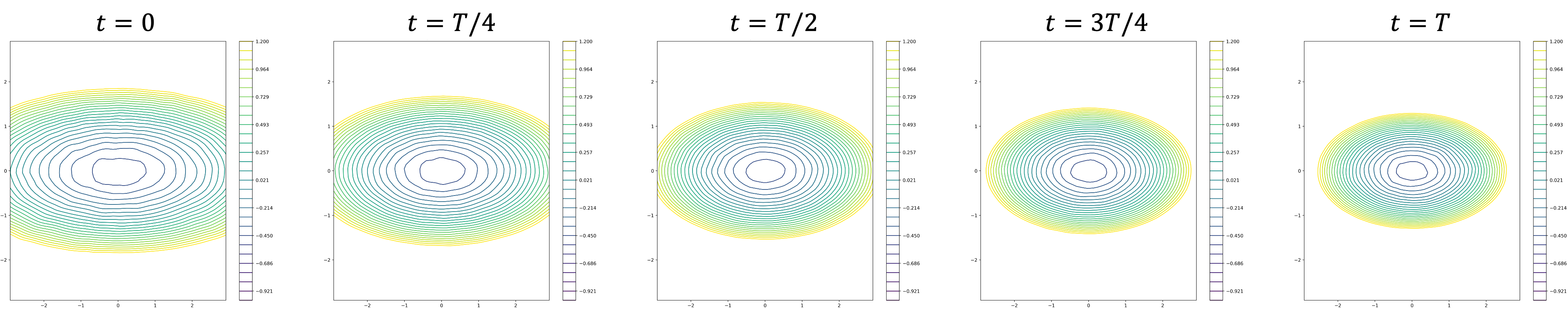}}
    \caption{The numerical results for \cref{ex:harmonic_oscillator}.}
    \label{fig:harmonic_oscillator}
\end{figure}
\end{example}

\begin{example}\label{ex:state_nonconvex1}
    We consider a state-dependent nonconvex Hamiltonian of the following form given in \cite{chow2019algorithm}:
    \begin{equation*}
        H\left(\bx,p\right) = -c\left(\bx\right)p_1 + 2\left\vert p_2\right\vert + \left\Vert \bp\right\Vert_2 - 1,
    \end{equation*}
    where $\bp=\left(p_1,p_2\right)$ and 
    \begin{equation}\label{eq:coeff_c}
        c\left(\bx\right) = 2\left(1+3\exp\left(-4\left\Vert \bx-\left(1,1\right)\right\Vert^2_2\right)\right).
    \end{equation}
    We employ the initial function as presented in \cref{ex:harmonic_oscillator}.     
The results up to $T=1$ are presented in \cref{fig:state_nonconvex1}, which are consistent with those reported in \cite{chow2019algorithm}.
    \begin{figure}
    \centering
    \includegraphics[width=0.98\textwidth]{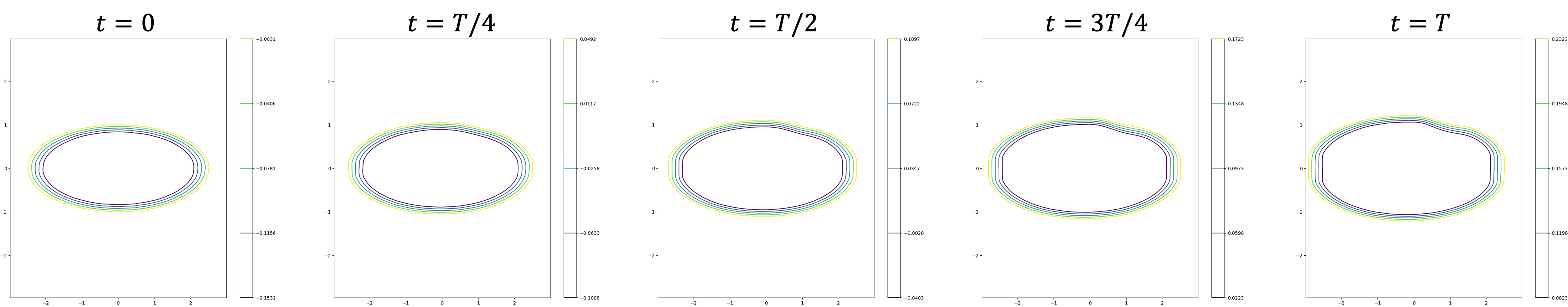}
    \caption{The numerical solution for \cref{ex:state_nonconvex1}.}
    \label{fig:state_nonconvex1}
\end{figure}
\end{example}

\begin{example}\label{ex:state_nonconvex2}
    We test the proposed method for a state-dependent nonconvex Hamiltonian of the following form given in \cite{chow2019algorithm}:
    \begin{equation*}
        H\left(\bx,p\right) = -c\left(\bx\right)\left\vert p_1\right\vert -c\left(-\bx\right)\left\vert p_2\right\vert,
    \end{equation*}
    where we write $\bp=\left(p_1,p_2\right)$ and 
    $c\left(x\right)$ is a coefficient function as given in \eqref{eq:coeff_c}.
     The initial function $g$ \eqref{eq:ellipse_initial_ft} presented in \cref{ex:harmonic_oscillator} is employed in this instance.
The results up to $T=0.3$ are presented in \cref{fig:state_nonconvex2} and the results are in agreement with those reported in \cite{chow2019algorithm}.
    \begin{figure}
    \centering
    \includegraphics[width=0.98\textwidth]{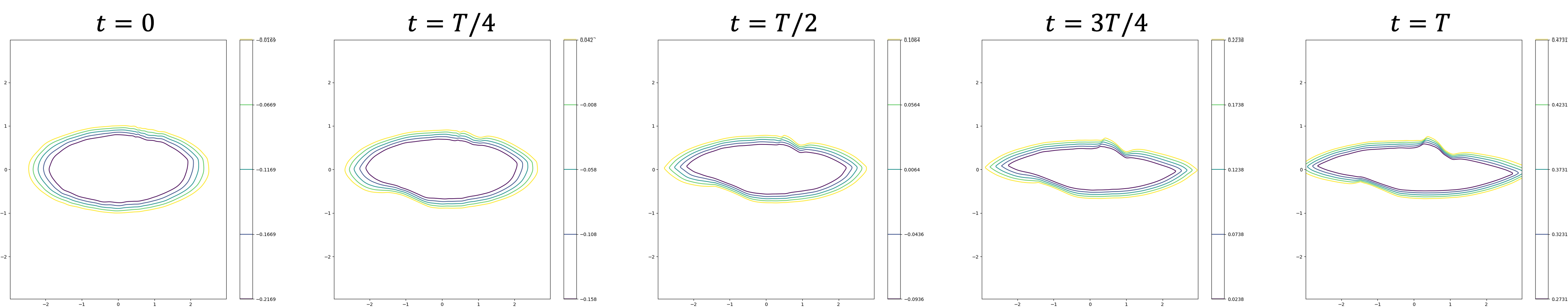}
    \caption{The numerical solution for \cref{ex:state_nonconvex2}.}
    \label{fig:state_nonconvex2}
\end{figure}
\end{example}

\begin{example}\label{ex:state_oc}
We solve the following optimal control problem:
\begin{equation*}
        u\left(\bx,t\right) = \inf\Bigl\{ g\left(\bx\left(0\right)\right); \dot{\bx}\left(t\right)=f\left(\bx\left(t\right)\right)\ba\left(t\right),\ \bx\left(t\right) = \bx,\ \left\Vert\ba\left(t\right)\right\Vert_2 \leq 1\Bigr\},
    \end{equation*}
        where $g$ is defined by
        \begin{equation*}
            g\left(\bx\right) = \frac{1}{2}\left(\bx^{\text{T}}A\bx - 1\right)
        \end{equation*} 
        with $A=\text{diag}(0.25,1)$ and $f$ is given by
        \begin{equation*}
        f\left(\bx\right) = 1+3\exp\left(-4\left\Vert \bx-\left(1,1\right)\right\Vert^2_2\right).
    \end{equation*}
   This corresponds to the HJ PDE given in \cite{chow2019algorithm}:
    \begin{equation*}
        \begin{cases}
            u_t+f\left(\bx\right)\left\Vert \nabla u\right\Vert_2=0\\
            u\left(\bx,0\right)=g\left(\bx\right).
        \end{cases}
    \end{equation*}
    When solving the maximization problem $\sup\ g\left(\bx\left(T\right)\right)$ with the same constraints, we obtain the following HJ PDE:
    \begin{equation*}
        \begin{cases}
            u_t-f\left(\bx\right)\left\Vert \nabla u\right\Vert_2=0\\
            u\left(\bx,0\right)=g\left(\bx\right).
        \end{cases}
    \end{equation*}
    The results for both the minimization (at $T=0.2$) and maximization (at $T=0.5$) problems are presented in \cref{fig:state_linear_oc} and are consistent with the results in \cite{chow2019algorithm}.
   \begin{figure}
    \centering
    \centering
    \subfigure[Minimization]{\includegraphics[width=0.3\textwidth]{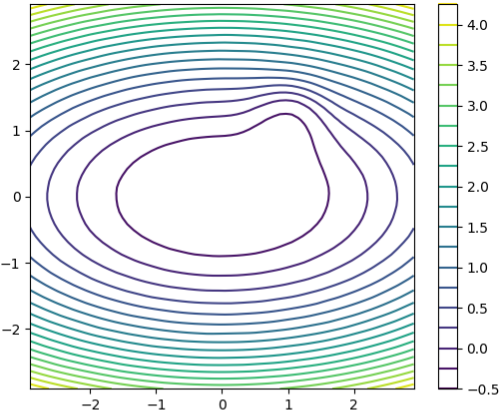}}$\quad\quad$
	\subfigure[Maximization]{\includegraphics[width=0.3\textwidth]{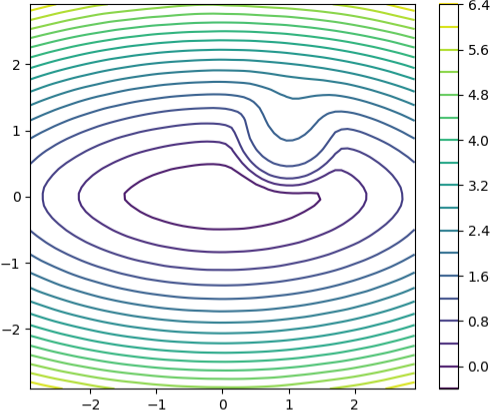}}
    \caption{The numerical solution for \cref{ex:state_oc} with $\Delta t=0.1$.}
    \label{fig:state_linear_oc}
\end{figure}
    \end{example}
    
\begin{example}\label{ex:state_oc_quad}
    Consider the following 10-dimensional quadratic optimal control problem presented in \cite{chen2024hopf}:
    \begin{equation*}
        u\left(\bx,t\right) = \inf\Bigl\{\int_0^t\left\Vert \dot{\bx}\left(s\right)\right\Vert ^2 - \psi\left(\bx\left(s\right)\right)\diff s + g\left(\bx\left(0\right)\right); \bx\left(t\right) = \bx\Bigr\},
    \end{equation*}
    where the potential function $\psi:\bR^d\rightarrow \left(-\infty,0\right]$ is $\psi\left(\bx\right) = \sum_{i=1}^d \psi_i\left(\bx_i\right)$, where each function $\psi_i:\bR\rightarrow \left(-\infty,0\right]$ is a positively 1-homogeneous concave function given by
    \begin{equation*}
        \psi_i\left(x\right)=\begin{cases}
            -a_ix & x\geq 0,\\
            b_ix & x<0,
        \end{cases}
    \end{equation*}
    with parameters 
    $\left(a_1,\cdots, a_d\right) = \left(4,6,5,\cdots,5\right)$ and $\left(b_1,\cdots,b_d\right) = \left(3,9,6,\cdots,6\right)$.
The corresponding HJ PDE reads:
\begin{equation*}
    \begin{cases}
        u_t+\frac{1}{2}\left\Vert \nabla u\right\Vert ^2 + \psi\left(\bx\right)= 0\\
        u\left(\bx,0\right) = g\left(\bx\right).
    \end{cases}
\end{equation*}
We conduct experiments for the two initial cost functions:
\begin{itemize}
    \item A quadratic initial function
    $g_1\left(\bx\right) = \frac{1}{2}\left\Vert \bx - \mathbf{1}\right\Vert_2^2$, where $\mathbf{1}$ denotes the $d$-dimensional vector whose elements are all one.\\
    \item A nonconvex initial function 
    \begin{equation*}
        g_2\left(\bx\right) = \underset{j\in\{1,2,3\}}{\min}\ g_j\left(\bx\right) = \underset{j\in\{1,2,3\}}{\min}\ \frac{1}{2}\left\Vert \bx-\by_j\right\Vert_2^2 - \alpha_j,
    \end{equation*}
    where $\by_1=\left(-2,0,\cdots,0\right)$, $\by_2 = \left(2,-2,-1,0,\cdots,0\right)$, $\by_3=\left(0,2,0,\cdots,0\right)$, $\alpha_1=-0.5$, $\alpha_2=0$, and $\alpha_3=-1$.
\end{itemize}
\end{example}
\cref{fig:state_oc_quad} presents two-dimensional slices of the solutions in the $xy$ plane for both cases up to time $T=0.5$. 
\begin{figure}
    \centering
    \subfigure[$g_1$]{\includegraphics[width=0.97\textwidth]{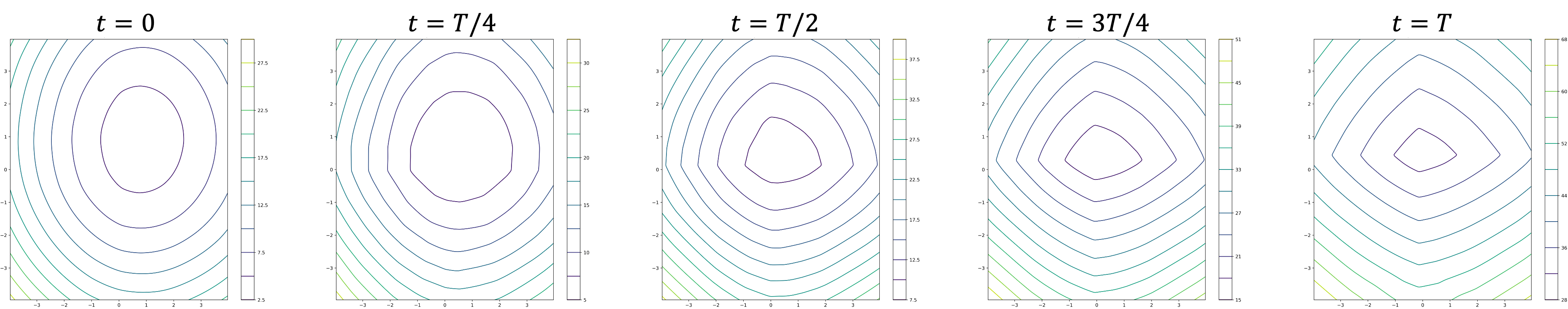}}$\quad$
	\subfigure[$g_2$]{\includegraphics[width=0.97\textwidth]{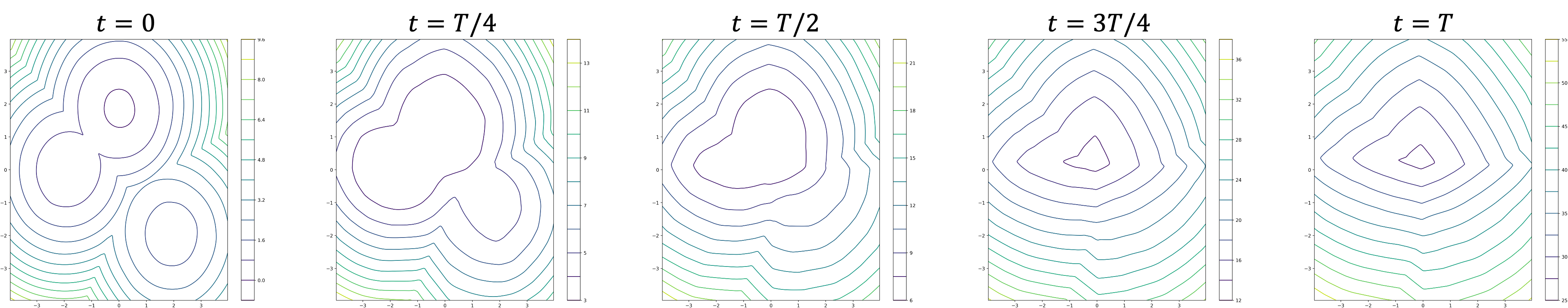}}
    \caption{The numerical results for \cref{ex:state_oc_quad}.}
    \label{fig:state_oc_quad}
\end{figure}
The results demonstrate that the evolution of the solution is non-trivial, as evidenced by the nonlinear progression of the level sets over time, exhibiting multiple kinks. These findings are consistent with the experimental results presented in \cite{chen2024hopf}.

\section{Conclusion}
We have introduced a novel implicit solution method for HJ PDEs derived from the characteristics of the PDE. This formula aligns with the Hopf-Lax formula for convex Hamiltonians but simplifies it by removing the need for Legendre transforms, thereby enhancing computational efficiency and broadening its practical applicability. The proposed formula not only bridges the method of characteristics, the Hopf-Lax formula, and Bellman's principle from control theory but also offers a simple and effective numerical approach for solving HJ PDEs. By integrating deep learning, the formula provides a scalable method that effectively mitigates the curse of dimensionality. Experimental results demonstrate its robustness and effectiveness across various high-dimensional and nonconvex problems without tuning the configuration of the deep learning model. These findings validate the method as a versatile and computationally efficient tool for solving high-dimensional, nonconvex dynamic systems and optimal control problems governed by HJ PDEs.

An important direction for future work includes a rigorous analysis of the proposed implicit solution formula. While experimental results demonstrate the method's effectiveness on various nonconvex problems, a comprehensive analysis is needed to confirm whether the proposed formula describes the viscosity solution of HJ PDEs in nonconvex problems. Since the formula involves the first derivatives and is a composite of multiple terms, the proposed minimization problem \eqref{eq:implicit_loss} is nonconvex, making the convergence of gradient descent non-trivial. Consequently, a convergence analysis would be an important future endeavor.

Regarding the deep learning approach, we approximate the expectation loss \eqref{eq:implicit_loss} using Monte Carlo integration \eqref{eq:discrete_loss}, which introduces a discrepancy between the empirical and expectation losses. A valuable research direction could involve investigating whether the stochastic gradient descent process, with its random collocation points at each epoch, converges to the global minimum of the expectation loss in the context of stochastic approximation.
Although we focused on scalability by maintaining a fixed model configuration across experiments, future research should explore the optimal selection of collocation points and network size for different problem dimensions. Furthermore, the investigation of using automatic differentiation to compute exact derivatives of the network, rather than finite differences such as ENO/WENO, presents an intriguing avenue for future research, particularly in its ability to capture shocks.
For state-dependent Hamiltonians, the development of higher-order methods beyond the proposed first-order linear approximation of the characteristic curve would be a promising direction. Finally, the simplicity and efficiency of the proposed method open up avenues for its application to a wide range of problems, including level set evolutions, optimal transport, mean field games, and inverse problems, which would constitute valuable extensions of this work.

\section*{Acknowledgments}
We would like to express our sincere gratitude to Tingwei Meng for the valuable discussions on the implicit formula during the early stages of our manuscript.

\bibliographystyle{siamplain}
\bibliography{mybib}

\end{document}